\documentclass{article} 
\usepackage{iclr2026_conference,times}

\usepackage{amsmath,amsfonts,bm}

\def\eqref#1{equation~\ref{#1}}

\def\1{\bm{1}}

\DeclareMathAlphabet{\mathsfit}{\encodingdefault}{\sfdefault}{m}{sl}
\SetMathAlphabet{\mathsfit}{bold}{\encodingdefault}{\sfdefault}{bx}{n}


\usepackage{hyperref}
\usepackage{url}
\usepackage{kotex}

\usepackage{graphicx}
\usepackage{booktabs}
\usepackage{multirow}
\usepackage{array}
\usepackage{pifont}

\usepackage{caption}
\usepackage{amsmath, amsthm}

\usepackage{algorithm}
\usepackage{float} 
\usepackage{amssymb}
\usepackage{algpseudocode}
\usepackage{amsmath}
\usepackage{makecell}
\usepackage{tabularx} 
\usepackage{wrapfig}
\usepackage{threeparttable}

\usepackage{graphicx} 

\usepackage{pifont}

\usepackage{etoc}
\etocdepthtag.toc{mtchapter}
\etocsettagdepth{mtchapter}{subsubsection}
\etocsettagdepth{mtappendix}{none}

\newtheorem{prop}{Proposition}

\title{SSDi8: Accurate and Efficient\\8-bit Quantization for State Space Duality} 

\author{Hyunwoo Kim$^1$\thanks{Equal Contribution}\hspace{6mm} Byoungchan Ko$^2$\footnotemark[1]\hspace{6mm} Minseok Kang$^2$\hspace{6mm}
Minwoo Kim$^2$\hspace{6mm} \\ \textbf{Dongjin Lee$^3$\hspace{6mm} Jaehoon Lee$^4$\hspace{6mm} Sungroh Yoon$^{3,4,5}\thanks{Corresponding Authors}$\hspace{6mm} Dahuin Jung$^1\footnotemark[2]$}\\
$^1$Department of Artificial Intelligence, Chung-Ang University \\
$^2$School of Computer Science and Engineering, Soongsil University \\
$^3$Department of Electrical and Computer Engineering, Seoul National University \\ 
$^4$Interdisciplinary Program in Artificial Intelligence, Seoul National University \\
$^5$AIIS, ASRI, INMC, and ISRC, Seoul National University \\
}

\iclrfinalcopy 
\begin{document}

\maketitle

\begin{abstract}
    Recent advances in sequence modeling have highlighted Mamba as a state space architecture offering efficient long-range dependency modeling and providing a viable alternative to Transformers. Building upon this, Mamba-2 introduces the Structured State Space Duality (SSD), which integrates recurrent and attention modes to achieve efficiency and scalability. However, this architectural expansion substantially increases memory and latency overhead, underscoring the need for efficient compression strategies tailored to SSD.
    In this work, we present SSDi8, the first post-training quantization framework specifically designed for SSD \textcolor{black}{to maintain a persistent INT8 path}. SSDi8 introduces a reformulation that decouples element-wise multiplications from matrix multiplications, enabling reuse of quantized activations across modules. Moreover, SSDi8 adaptively quantizes channel-varying activations at cost-effective points, further reducing latency. On the accuracy side, SSDi8 explicitly leverages the intrinsic dimensional decomposition of SSD, exploiting distinct outlier distributions across axes, and incorporates an error correction term based on per-channel error statistics. Comprehensive experiments demonstrate that SSDi8 achieves accuracy comparable to FP16 while delivering up to 1.4$\times$ speedup in W4A8 and W8A8 settings. We further validate its robustness in resource-constrained environments by deploying it on the Orin \textcolor{black}{NX} device. \textcolor{black}{Code is available at \href{https://github.com/cau-hai-lab/SSDi8}{https://github.com/cau-hai-lab/SSDi8}.}
\end{abstract}

\section{Introduction} \label{01intro}

Mamba~\citep{gu2024mamba} is a recent state space sequence model that builds upon the Structured State Space Model (SSM)~\citep{gu2020hippo,gu2022efficiently} to provide efficient long-range dependency modeling with constant computation and memory usage. While global attention in Transformers~\citep{vaswani2017attention} can enhance performance as model size increases, it also incurs quadratic growth in computation and memory with respect to sequence length, which poses substantial challenges for large-scale training and deployment. In contrast, Mamba achieves performance comparable to or exceeding state-of-the-art architectures across billion-scale language models, positioning it as a strong candidate for next-generation sequence modeling.

Despite its algorithmic efficiency, Mamba faces practical limitations: its specialized state space recurrence is difficult to parallelize on modern accelerators, making it less hardware-friendly than optimized Transformer kernels, 
and it shows relatively diminishing efficiency when scaled to larger parameter sizes. To overcome these issues, Mamba-2~\citep{dao2024transformers} introduces the Structured State Space Duality (SSD), a hybrid design that integrates recurrent mode with attention mode. Mamba-2 adds a head dimension analogous to multi-head attention to enhance scalability and employs a dual representation that improves general matrix multiplication (GEMM) utilization, yielding higher throughput on GPUs and TPUs. While the original Mamba exhibited limited efficiency beyond 2.7B parameters, Mamba-2 scales effectively to over 8B parameters and achieves competitive performance across language, audio~\citep{lee2025mamba2audiocaptioningdesign}, vision~\citep{shi2024vssd}, and multimodal tasks~\citep{huang2024ml}. Yet this expansion also intensifies memory and latency overhead, highlighting the need for efficient compression and optimization.

The recurrent mode of SSD is computationally efficient but system-inefficient, while the attention mode is relatively computationally demanding. During its operation, SSD repeatedly invokes activations across modules and performs sequential updates. In this process, activations reuse across modules necessitates frequent DRAM accesses, and the intrinsically higher latency of DRAM introduces considerable overhead.

\begin{wraptable}{r}{0.38\textwidth}
\vspace{-0.17in}
\caption{\textcolor{black}{Accuracy under major layer quantization of Mamba-2. Significant degradation arises when SSD is quantized per-tensor.}}
\label{tab:sensitivity}
\vspace{-0.1in}
\begin{center}
\begin{threeparttable}
\resizebox{1.\linewidth}{!}{
\begin{tabular}{l|c|l|c}
\toprule
Model & Bitwidth & \makecell{Quantized \\ Layer(s)} & ACC \\
\midrule\midrule
\multirow{4}{*}{2.7B}
    & FP16  & \multicolumn{1}{c|}{--} & 63.8\% \\
\cmidrule{2-3}\cmidrule{4-4}
  & \multirow{3}{*}{W4A8}
           & + In Proj   & 63.6\% \\
  &        & \; + SSD       & 58.4\% \\
  &        & \; \; + Out Proj  & 54.6\% \\
\bottomrule
\end{tabular}
}
\end{threeparttable}
\end{center}
\vspace{-0.2in}
\end{wraptable}

\textcolor{black}{As shown in Tab.~\ref{tab:sensitivity}, directly applying quantization methods originally designed for Transformers—such as Hadamard rotation or GPTQ—to SSD layers leads to substantial accuracy degradation. This stems from the distinctive computational organization of SSD. First, the model dimension is partitioned into the number of heads and the per-head dimension, each following markedly different statistical distributions; failure to account for this property results in significant performance loss.} Second, SSD contains dimension-varying activations whose shapes differ between memory storage and computation, and these activations are repeatedly invoked across multiple modules. Third, element-wise multiplications are extensively intertwined with matrix multiplications, further complicating quantization. 
In this work, we conduct the first comprehensive analysis of SSD \textcolor{black}{to maintain a persistent INT8 path}, providing observations that reveal the internal factors contributing to its quantization sensitivity.

\textcolor{black}{Accordingly, we propose SSDi8, an accurate and efficient post-training quantization framework that reduces both inference latency and performance degradation within SSD. For latency reduction, SSDi8 quantizes channel-variant and recurrent activations at optimal points and reuses them, ensuring an uninterrupted INT8 execution path from input to output. Furthermore, we address element-wise operations that disrupt this path by introducing a sparse-aware reformulation, with the guarantee formally established through mathematical analysis. This design keeps the execution in INT8 while substantially alleviating memory bottlenecks and computational overhead.} For accuracy, SSDi8 leverages the intrinsic dimensional structure and properties of SSD. Specifically, external dimensions entering SSD are decomposed into two axes, each exhibiting distinct outlier distributions, which are explicitly exploited to reduce quantization error. Furthermore, we introduce an error correction term based on per-channel error means, yielding consistent gains in accuracy. Through these mechanisms, SSDi8 achieves a balanced optimization of both efficiency and performance.

SSDi8 achieves accuracy comparable to FP16 while enabling up to 1.4$\times$ inference speedup under both W4A8 and W8A8 configurations, while excluding W4A4 due to hardware-induced slowdowns as discussed in~\cite{linqserve} Notably, in the context of SSD—where error sensitivity often causes severe degradation—our method incurs negligible accuracy loss while delivering substantial latency reductions, with single-inference speedups reaching 1.5$\times$. To the best of our knowledge, this represents the first successful application of  \textcolor{black}{persistent INT8 path} within the Mamba-2 SSD architecture. Furthermore, we demonstrate that SSDi8 maintains efficiency in resource-constrained environments through deployment on the Orin \textcolor{black}{NX} device.

\section{Related \textcolor{black}{Work}}
\label{02relatedworks}

\textbf{Mamba Architecture.}
Mamba is a sequence modeling architecture built on SSMs, which has been explored as an alternative to Transformers in order to circumvent the quadratic complexity of self-attention~\citep{gu2024mamba}. Unlike conventional linear SSMs~\citep{gu2022efficiently,smith2023simplified}, Mamba incorporates a selective state space mechanism that adaptively gates input-dependent state transitions and output projections, enabling more expressive sequence modeling. Mamba-2 extends this framework by introducing structured SSDs~\citep{dao2024transformers}, which establishes a formal equivalence between SSMs and linear attention and enables optimized GEMM-based implementations. This design substantially improves hardware utilization on modern accelerators. Furthermore, Mamba-2 allows the state dimension—previously constrained to $N = 16$ in Mamba-1—to scale stably to $N = 64-128$ and beyond. In addition, Mamba-2 integrates a multi-head structure analogous to multi-head attention, further enhancing scalability. These advances make large-scale parameter expansion feasible, but they also intensify memory and latency overhead, motivating the need for compression and deployment strategies.

\textbf{Quantization for Mamba Models.}
Recently, several studies have begun to explore quantization for the Mamba models~\citep{tang2024bi,yu2025slender}. MambaQuant~\citep{xu2025mambaquant} and Quamba1~\citep{chiang2024quamba} introduced Post-Training Quantization (PTQ) methods targeting the original Mamba-1 architecture, but their approaches are not directly applicable to SSD-based Mamba-2. Quamba2~\citep{chiangquamba2} extended quantization to Mamba-2, applying W4A8 and W8A8 settings that include SSD blocks. \textcolor{black}{However, its method is limited to the inputs of SSD layers and does not adequately address precision issues within internal SSD computations, leaving the INT8 execution path incomplete and constraining latency optimization.}  \textcolor{black}{In this work, we bridge this gap by quantizing internal SSD computations and introducing an additional error-correction mechanism, thereby establishing a continuous INT8 execution path that enables accurate and efficient low-bit quantization for the Mamba-2 architecture.}

\section{Background}  \label{03preliminary}

\subsection{Quantization}
Quantization discretizes continuous values into a finite set of integer levels. In particular, uniform quantization divides the value range into equal intervals, mapping each element of a tensor $X$ to its nearest quantized level as follows:
\begin{equation}
    \begin{aligned}
        \widetilde{X} = \operatorname{round}\!\left(\frac{X}{\alpha_X}\right),
        \qquad 
        \alpha_X = \frac{\max(|X|)}{2^{\,b-1}-1},
    \end{aligned}
    \label{eq:quant}
\end{equation}
where $\widetilde{X}$ is the quantized tensor, $\alpha_X$ is the scaling factor that defines the step size based on the maximum absolute value of $X$, and $b$ is the bit-width.

\subsection{Mamba-1}

Mamba is an architecture built upon State Space Models (SSMs), composed solely of activation operations, where a hidden state variable is employed to efficiently compress and propagate memory~\citep{gu2024mamba}. The fundamental state update and output equations are defined as follows:
\begin{equation}
    \begin{aligned}
        h'(t) = Ah(t) + Bx(t), \quad y(t) = Ch(t).
    \end{aligned}
    \label{eq:ssm}
\end{equation}
Eq.~\ref{eq:ssm} builds on the theoretical foundations of HiPPO~\citep{gu2020hippo} and S4~\citep{gu2022efficiently}, which substantially improve both performance and efficiency. 
However, since SSMs are defined in continuous time, applying them to discrete inputs requires discretization. In practice, Zero-Order Hold is used to preserve previous values, and a time-step activation $\Delta$ is introduced to discretize matrices $A$ and $B$. These operations are performed independently along the channel dimension of the input $x$, so that each channel independently follows its own SSM formulation:
\begin{equation}
    \begin{aligned}
        S_B(x) = xW_B, \quad S_C(x) = xW_C, \quad S_\Delta(x) = xW_\Delta.
    \end{aligned}
    \label{eq:mamba}
\end{equation}
Through input-dependent activations, Mamba highlights important information while suppressing noise, improving long-range dependency modeling.

\begin{figure}[t] 
  \centering
  \includegraphics[width=1.\textwidth]
  {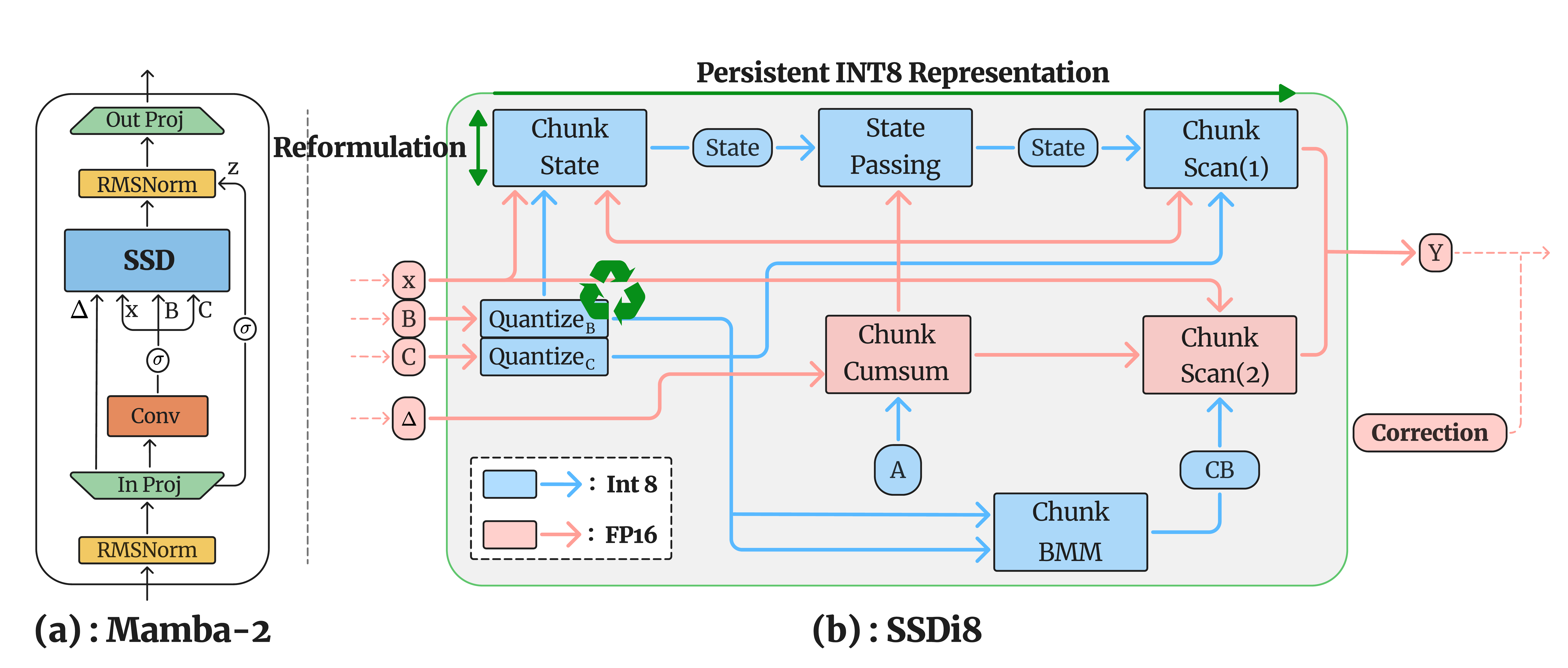} 
  \\[-1.0em]
  \caption{\textcolor{black}{(a) Mamba-2 block architecture. (b) SSD pipeline in SSDi8. SSDi8 enables the persistent INT8 representation path through reformulation and quantized activation reuse, while mitigating performance degradation via channel-aware quantization and mean correction.}} 
  \label{fig:main_arc} 
  \vspace{-4pt}
\end{figure}


\section{Methodology} 
\label{sec:04method}

\textcolor{black}{The overall workflow of SSDi8 is illustrated in Fig.~\ref{fig:main_arc}. A substantial portion of SSD modules is executed along the persistent INT8 representation path, reusing quantized activations and applying a sparse-aware reformulation to element-wise operations that disrupt this path. The output tensor $dA_{cs}$ from \texttt{ChunkCumsum} is negligible in size compared to other tensors, yet its recovery after quantization is challenging due to the element-wise multiplication; hence, it is retained in FP16. In the same vein, \texttt{ChunkScan2} remains in FP16 for analogous reasons. These choices are further elaborated within this section.}

\subsection{Preliminary Study: Mamba-2's Structured State Space Duality}

The Structured State Space Duality (SSD) in Mamba-2 consists solely of activation operations and unifies the recurrent and attention modes, thereby reducing computational cost and improving efficiency over the recurrence-dominated operations of conventional SSMs.
Concretely, the SSM computation can be expressed as a lower-triangular structured matrix: the diagonal block, which directly influences the output, is computed via the attention formulation using matrix multiplications, while the off-diagonal blocks, which require recurrence, are computed by leveraging the semiseparable property, which admits low-rank factorizations.

A key distinction from Mamba is that Mamba-2 introduces a number of heads $\texttt{H}$, analogous to the multi-head structure in Transformers. As shown in  Fig.~\ref{fig:dimension}, the value of $\texttt{H}$ is formally defined by $\texttt{D} = \texttt{H} \odot \texttt{P}$, where $\texttt{D}$ denotes the model dimension and $\texttt{P}$ the head dimension. Notably, $\texttt{H}$ and $\texttt{P}$ remain independent axes, with $\texttt{H}$ chosen to be much larger than $\texttt{P}$. For efficiency, the input-dependent $\texttt{B}$ and $\texttt{C}$ are parameterized with an auxiliary dimension $\texttt{G}$, and broadcast to $\texttt{{H}}$ when required. 

Formally, the input activations of SSD and its dimension before discretization are given as follows:
\[
\begin{array}{rcl  rcl  rcl}
A & \in & \mathbb{R}^{(\texttt{H})}, 
& \Delta & \in & \mathbb{R}^{(\texttt{B,L,H})}, 
& X & \in & \mathbb{R}^{(\texttt{B,L,H,P})}, \\[6pt]
B & \in & \mathbb{R}^{(\texttt{B,L,G,N})}, 
& C & \in & \mathbb{R}^{(\texttt{B,L,G,N)}}, 
& Y & \in & \mathbb{R}^{(\texttt{B,L,H,P})},
\end{array}
\]
where \(\texttt{B}\) denotes the batch size, \(\texttt{L}\) the sequence length, \(\texttt{H}\) the number of heads, \(\texttt{G}\) the number of groups, \(\texttt{P}\) the head dimension, \(\texttt{N}\) the state dimension, and \(\texttt{Y}\) the final output of SSD. To shorten the effective recurrent path and enable parallelism, the sequence is partitioned as \(\texttt{L} = \texttt{c} \odot \texttt{l}\), where \(\texttt{c}\) is the number of chunks and \(\texttt{l}\) is the chunk size. The computation then proceeds through five modules—\texttt{ChunkCumsum}, \texttt{ChunkState}, \texttt{StatePassing}, \texttt{ChunkBMM}, and \texttt{ChunkScan}—which together yield the SSD output \(\texttt{Y}\). \textcolor{black}{Additional details are provided in Appendix~\ref{sec:algorithm} .}

\textbf{\texttt{ChunkCumsum}  (Input \((\Delta,A) \mapsto\) Output \((\Delta, dA_{\mathrm{cs}})\)).}
\texttt{ChunkCumsum} applies a softplus transformation to $\Delta$, 
a time-step dependent scaling factor introduced for discretization, 
and discretizes the decay activation $A$ that governs recurrent dynamics. 
It then prepares the cumulative decay term $dA_{\mathrm{cs}} $, 
which is subsequently consumed by downstream modules for state updates.

\textbf{\texttt{ChunkState} (Input \((dA_{\mathrm{cs}},\Delta,\,B,\,X) \mapsto\) Output \((\mathrm{State})\)).} 
The \texttt{ChunkState} module discretizes the projection matrix \(B\), applies the decay factor, 
and multiplies it with the input \(X\) to generate the hidden state. 
The cumulative decay is computed as $
\mathrm{Decay}_{\mathrm{state}}\;=\;
\exp\!\big(dA_{\mathrm{cs}}^{final} - dA_{\mathrm{cs}}\big).
$
For simplicity, we denote $\Delta \odot \mathrm{Decay}_{\mathrm{state}}$ by $LUT_{\mathrm{state}}$ where \(\odot\) denotes element-wise multiplication, in the following modules. The resulting state update is formulated as
\begin{equation}
    \mathrm{State} \;=\; X \times \big(B \odot LUT_{\mathrm{state}}\big) 
    \label{eq:original_state}
\end{equation}

\begin{figure}[t] 
  \centering
  \includegraphics[width=1.\textwidth]
  {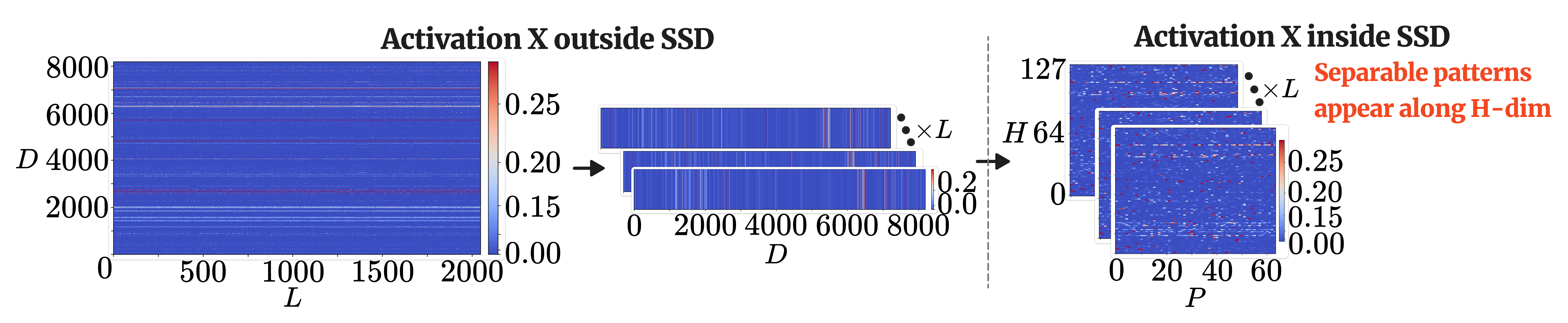} 
  \\[-1.0em]
  \caption{\textcolor{black}{Visualization of activation $X$ in the 16th block of Mamba-2 8B before and after the SSD input transformation. The pre-SSD dimension (\texttt{B,L,D}) exhibits no clear token-wise pattern, whereas the transformed dimension (\texttt{B,L,H,P}) within SSD reveals distinct patterns along the H-dim.}} 
  \label{fig:dimension} 
  \vspace{-4pt}
\end{figure}

\textbf{\texttt{StatePassing} (Input \((\mathrm{State},dA_{\mathrm{cs}}) \mapsto\) Output \((\mathrm{State})\)).}
This module integrates the states computed from independent chunks into the actual 
recurrent state through decay. The decay term is given by
\begin{equation}
    \mathrm{Decay}_{\text{pass}} = \exp\!\big(dA^{final}_{\mathrm{cs}}), 
    \label{eq:original_decay}
\end{equation}
and the recurrent update is performed over the chunk as
\begin{equation}
 \qquad \mathrm{State}_{c} \in 
    \mathrm{State}_{i+1} \leftarrow \mathrm{State}_{i+1}
    \;+\; \mathrm{Decay}_{i+1} \odot \mathrm{State}_{i},
    \qquad i=0,1,\dots, c-2.
    \label{eq:original_state_decay}
\end{equation}

\textbf{\texttt{ChunkBMM} (Input \((B,C) \mapsto\) Output \((CB)\)).}
\texttt{ChunkBMM} performs a batched matrix multiplication between \(C\) and \(B\). This operation extracts the diagonal blocks of the product, yielding \(CB \), which is used in 
the output computation within SSD.

\textbf{\texttt{ChunkScan1} (Input \((\mathrm{State},C,dA_{\mathrm{cs}},\Delta) \mapsto\) Output} \((out_{\text{off-diag}})\)\textbf{)}.
\texttt{ChunkScan1} computes the off-diagonal interaction term by performing 
a matrix multiplication between the recurrent state 
State and the projection matrix C. 
The decay contribution is modeled as 
\(\mathrm{Decay}_{\text{scan1}} = \exp(dA_{\mathrm{cs}})\), 
and combined with \(\Delta\) to form \(LUT_{\text{scan1}}\) (\(= \Delta \odot \mathrm{Decay}_{\mathrm{scan1}}\)). 
The final off-diagonal output is obtained as
$out_{\text{off-diag}} = \big(\mathrm{State} \times C\big) \odot LUT_{\text{scan1}}.$

\textbf{\texttt{ChunkScan2} (Input \((X,CB,dA_{\mathrm{cs}},\Delta) \mapsto\) Output} \((out_{\text{diag}})\)\textbf{)}.
\texttt{ChunkScan2} computes the diagonal contribution by projecting the input representation \(X\) with the combined activation \(CB\), while modulating the result using the decay and discretization terms \((dA_{\mathrm{cs}}, \Delta)\). 
This module complements the off-diagonal pathway from \texttt{ChunkScan1}, and together they form the complete output of SSD: $Y = out_{\text{off-diag}} + out_{\text{diag}}.$

\subsection{SSDi8}

\textbf{Quantization of B,C.} 
Within SSDi8, the handling of the channel-dependent activations $B$ and $C$ constitutes one of the strategies, since they are repeatedly invoked across three SSD submodules. Rather than quantizing them separately within each module, SSDi8 quantizes once and reuses the resulting INT8 tensors, thereby reducing memory traffic and enabling a consistent low-precision execution path. A challenge arises because $B$ and $C$ are defined along the group dimension $\texttt{G}$ but are broadcast to the head dimension $\texttt{H}$ during computation, with $\texttt{H}$ typically an order of magnitude larger than $\texttt{G}$. Naively applying quantization after broadcasting induces significant overhead (up to $4\times$), which SSDi8 addresses by optimizing the placement of quantization operations.

To minimize redundant overhead, SSDi8 performs an early quantization of the channel-varying activations $B$ and $C$ once along the group axis $\texttt{G}$ at the beginning of each SSD layer. The resulting INT8 tensors are then reused across all downstream modules, maintaining a consistent low-bitwidth representation without repeated quantization. Since $|\texttt{G}|\!\ll\!|\texttt{H}|$, quantization along $\texttt{G}$ is considerably more efficient, adding only about \(3\%\) to the total SSD latency. Moreover, as shown in Figs.~\ref{fig:dimension} and~\ref{fig:bc_distributions}, the head dimension $\texttt{H}$ exhibits highly heterogeneous value distributions across heads—up to $5\times$ variation—making direct per-head quantization unstable. 
\textcolor{black}{Similarly, the group dimension $\texttt{G}$ shows distinct characteristics and must be considered in quantization.} While the state dimension $\texttt{N}$ exhibits relatively consistent statistics, it directly participates in subsequent matrix multiplications, where quantization errors cannot be restored. Thus, it is excluded from the quantization axes.

\begin{figure}[t] 
  \centering
  \includegraphics[width=.9\textwidth]
  {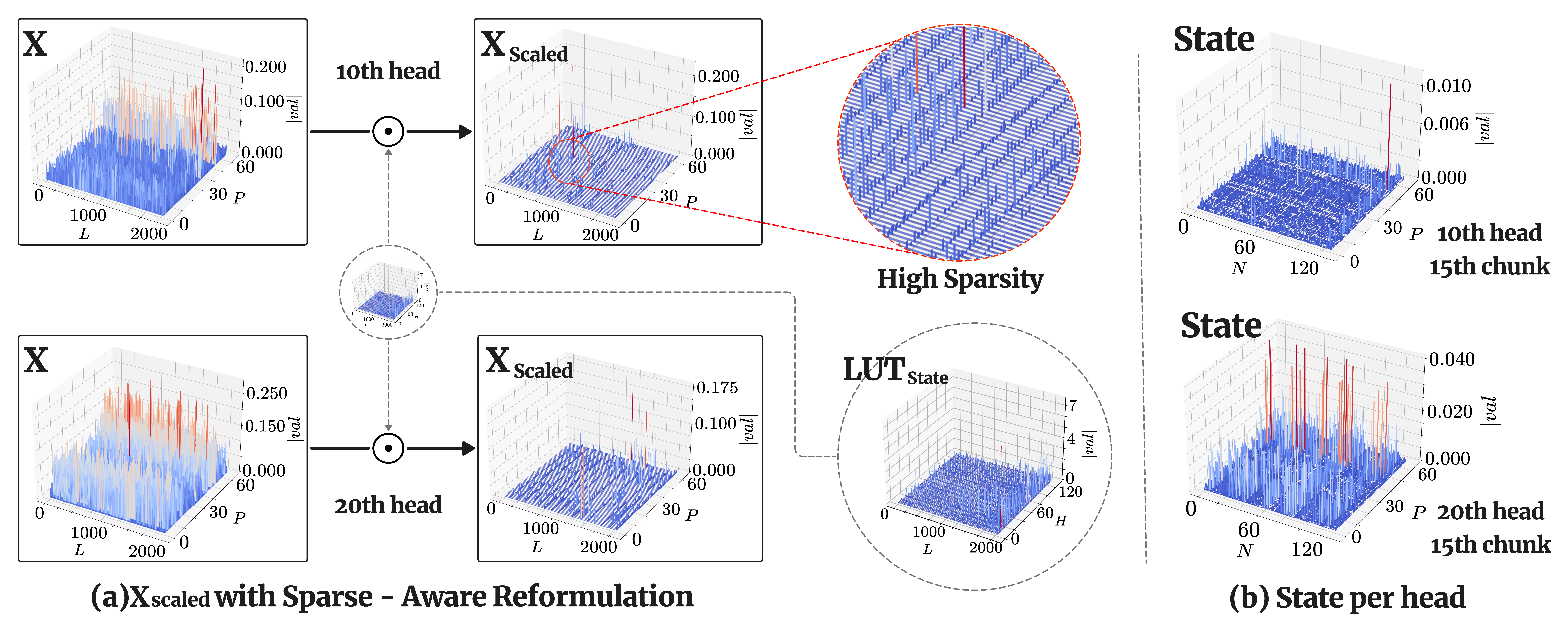} 
  \\[-1.0em]
  \caption{\textcolor{black}{(a) Distribution plots of head-wise $X$ and $LUT_{\text{state}}$ in the 27th block of the $\texttt{ChunkState}$ module, and their element-wise product after reformulation $X_{\text{scaled}}$. The channel-wise ($\texttt{P}$-dim) distribution of $X_{\text{scaled}}$ is highly sparse. (b) Head-wise distribution plots of $\text{State}$.}}  
  \label{fig:chunkstate}
  \vspace{-5pt}
\end{figure}

\textbf{Sparse-aware Reformulation.}
As defined in Eq.~\ref{eq:original_state}, the \texttt{ChunkState} computation applies $B \odot LUT_{\text{state}}$ prior to the matrix multiplication with $X \in \mathbb{R}^{(\texttt{B,H,c,l,P})}$. 
\textcolor{black}{Here, $LUT_{\text{state}} \in \mathbb{R}^{(\texttt{B,H,c,l})}$ is multiplied element-wise with $B \in \mathbb{R}^{(\texttt{B,H,c,l,N})}$ to impose a decay pattern across the steps within each \texttt{B}, \texttt{H}, and \texttt{c}. The resulting $B \odot LUT_{\text{state}}$ is then multiplied with $X$ along the \texttt{l}-axis to project the \texttt{l} sequence steps into \texttt{N}. The operations are executed independently and in parallel across \texttt{B}, \texttt{H}, and \texttt{c}.}
However, this ordering introduces \textcolor{black}{three} critical limitations: (i) although $B$ is quantized to INT8, the presence of $LUT_{\text{state}}$ in FP16 enforces a floating-point execution path, \textcolor{black}{undermining} the efficiency of INT8 GEMM; 
\textcolor{black}{(ii) because $LUT_{\text{state}}$ exhibits exponential variation along the chunk axis $\texttt{l}$, any quantization scheme other than per-$l$ quantization introduces substantial error, while even per-$l$ quantization is infeasible due to quantization error accumulation after the $l$-axis matrix multiplication; (iii) attempting $Q(B \odot LUT_{\text{state}})$ requires quantization after the $G \to H$ expansion, which incurs significant overhead.}
To enable a fully INT8 execution path, SSDi8 reformulates the computation as  
\begin{equation}
    \text{State}_{\text{INT32}} \;=\; Q(X_{\text{scaled}}) \times Q(B), \; \; X_{\text{scaled}} = LUT_{\text{state}} \odot X,
    \qquad \text{State}_{\text{INT32}} \in \mathbb{R}^{(\texttt{B,H,c,P,N})},
\end{equation}
where $Q(\cdot)$ denotes quantization. 
\textcolor{black}{This reformulation is valid because $LUT_{\mathrm{state}}$ applies its multiplication along the $l$-dimension shared by both $X$ and $B$, while all other dimensions operate independently. This property ensures that moving the scaling operation from $B$ to $X$ preserves the computational result, and quantizing the resulting $X_{\mathrm{scaled}}$ mitigates the limitations. In this case, $Q(X_{\mathrm{scaled}})$ is quantized along the $(\texttt{P}, \texttt{H})$ axes because $LUT_{\text{state}}$ is broadcast along the $\texttt{P}$ axis while $X$ preserves consistency across $\texttt{P}$ and per-$(\texttt{H})$ heterogeneity as shown in Fig.~\ref{fig:chunkstate}(a) and Fig.~\ref{fig:dimension}.}
\textcolor{black}{Quantization simulations show that $X_{\text{scaled}}$ exhibits pronounced outliers along the channel axis, which makes accurate quantization challenging. However, the actual quantization error of $Q(X_{\text{scaled}})$ does not significantly increase despite the presence of such outliers. From a distributional perspective, this robustness can be attributed to the high sparsity of $X_{\text{scaled}}$ as shown in Fig.~\ref{fig:chunkstate} (a), which leads to reduced quantization errors overall. To formally validate this property, we prove in Appendix~\ref{sec:proof} that, under mild conditions, the quantization error of $X_{\text{scaled}}$ is smaller than that of $Q(X)\odot LUT_{\text{state}}$. This sparsity-aware proof justifies the proposed reformulation, and empirical results further confirm that the resulting performance degradation remains negligible.}

\textbf{Persistent INT8 Representation of Recurrent States.} 
$\mathrm{State}_{\text{INT32}}$ obtained from the proposed reformulation is accumulated in INT32. Since INT32 consumes twice the memory of FP16, 
\textcolor{black}{SSDi8 reduces memory traffic by directly converting INT32 to INT8 in registers with quantization scales:}
\begin{equation}
\mathrm{State}_{\text{INT8}} \;=\; \mathrm{Round}\!\left(\mathrm{State}_{\text{INT32}} \odot \frac{s_{x}s_{b}q_{\max}}{s_{s}}\right), 
\qquad q_{\max}=2^{b-1}-1,
\end{equation}
where $s_{x},\,s_{b},\,s_{s}$ denote the quantization scales of $X$, $B$, and $\mathrm{State}$, respectively. The resulting INT8 tensor is then stored in DRAM, avoiding intermediate FP16 representations and thereby reducing memory bandwidth usage. $\mathrm{State}$ also exhibits variation across heads $\texttt{H}$. As shown in Fig.~\ref{fig:chunkstate} (b), consistency is observed along both the $\texttt{P}$ and $\texttt{N}$,
since $\texttt{N}$ participates in subsequent multiplications within \texttt{ChunkScan1}, quantization along $\texttt{N}$ is not adopted. $\mathrm{State}_{\text{INT8}}$ is thus quantized per-$(\texttt{H,P})$.

In the \texttt{StatePassing} module, independently computed chunkwise states are recurrently accumulated with decay to form the actual state, as shown in Eq.~\ref{eq:original_state_decay}. Since $\mathrm{State}$ is already in INT8, maintaining the INT8 execution path requires quantizing the FP16 $\mathrm{Decay}$. The computation proceeds independently along $\texttt{B,H}$ and recurrently along $\texttt{c}$, where each $\mathrm{Decay}$ is a scalar. This enables element-wise fixed-point quantization of $\mathrm{Decay}$. Formally,
\begin{equation}
    Q(\mathrm{State}_{i+1}) \;\leftarrow\; Q(\mathrm{State}_{i+1}) \;+\; 
    \frac{Q(\mathrm{Decay}_{i+1})}{S} \odot Q(\mathrm{State}_{i}), 
    \qquad i=0,1,\dots,c-2,
\end{equation}
where $S$ is a gating constant chosen as $2^{k}$ to enable bit-shift operations for minimal latency (with $k=7$ in experiments).
Per-$\texttt{H,P}$ quantization ensures that all $\mathrm{State}_{\text{INT8}}$ across $\texttt{c}$ share a common scale. This allows recurrent updates to be performed by simple bit-shift operations.  
As a result, $\mathrm{State}_{\text{INT8}}$ can be persisted through \texttt{ChunkScan1}, enabling INT8 Tensor Core multiplications with $C_{\text{INT8}}$. Here, $\mathrm{Decay}\in \mathbb{R}^{(\texttt{B,H,c,l})}$ aligns with the output $out_{\text{off-diag}}\in \mathbb{R}^{(\texttt{B,H,c,l,P})}$, so element-wise multiplication is applied directly after the matrix multiplication.
\textbf{Quantization \textcolor{black}{of} \texttt{ChunkBMM} and \texttt{ChunkScan2}.}  
As shown in Fig.~\ref{fig:main_arc}, the quantized activations $B_{\text{INT8}}$ and $C_{\text{INT8}}$ are reused in the \texttt{ChunkBMM} module. Because both are defined on the group dimension $\texttt{G}$, the multiplication proceeds without conversion to the head dimension $H$, producing $CB_{\text{INT32}}$. The tensor $CB \in \mathbb{R}^{(\texttt{B,G,c,l,l})}$ is larger than $X$, so its quantization yields substantial memory savings. Similar to \texttt{ChunkState}, a single $\text{INT32} \to \text{INT8}$ step is applied to minimize memory traffic. In \texttt{ChunkScan2},  $(LUT_{\text{Scan2}} \odot Q(CB)) \times X$ involves $X$ in FP16, enforcing a floating-point path. \textcolor{black}{Due to its shape, $LUT_{\text{Scan2}}$ is element-wise multiplied with $CB$, making post-quantization recovery difficult and rendering reformulation infeasible due to a shape mismatch with $X$. The dequantization~\textcolor{black}{scale}~of $CB$ is fused into $LUT_{\text{Scan2}}$, reducing overhead while allowing partial FP16 execution. Experiments demonstrate that this process alone yields substantial latency gains.}

Leveraging the persistent INT8 representation of recurrent states together with the sparse-aware reformulation and reuse of activation, SSDi8 achieves up to $1.38\times$ speedup overall, with gains reaching $1.6\times$ in the \texttt{ChunkScan} module compared to FP16 execution.

\begin{table}[t!]
\centering
\caption{\textcolor{black}{Evaluation of Mamba-2 (1.3B, 2.7B, 8B) with three quantization methods (Quamba, Quamba2, and SSDi8) on six zero-shot tasks (LA, HS, PIQA, Arc-E, Arc-C, WG).}}
\resizebox{.95\linewidth}{!}{
\begin{tabular}{c|c|c|c|cccccc|c}
\toprule
Model & Size & Methods & Bitwidth & LA & HS & PIQA & Arc-E & Arc-C & WG & Avg. \\
\midrule\midrule

\multirow{24}{*}{Mamba-2}

& \multirow{8}{*}{1.3B}
&  -                           & FP16  & 65.6\% & 59.9\% & 73.3\% & 64.1\% & 33.3\% & 60.8\% & 59.5\% \\ \cmidrule{3-11}
&                              & Quamba                      & W8A8  & 49.8\% & 58.5\% & 71.2\% & 61.9\% & 32.1\% & 58.1\% & 55.2\% \\ \cmidrule{3-11}
&                              &      \multirow{2}{*}{\makecell{Quamba2}}           
                               & W8A8  & 62.0\% & 59.2\% & 72.5\% & 63.4\% & 32.7\% & 60.0\% & 58.3\% \\
&                              &                             & W4A8  & 61.0\% & 58.8\% & 72.4\% & 62.7\% & 32.6\% & 59.1\% & 57.7\% \\ \cmidrule{3-11}
&                              & \multirow{2}{*}{SSDi8 (Ours)} 
                               & W8A8  & \textbf{64.7\%} & \textbf{59.7\%} & \textbf{72.7\%} & \textbf{64.0\%} & \textbf{32.8\%} & \textbf{60.9\%} & \textbf{59.1\%} \\
&                              &                             & W4A8  & \textbf{63.6\%} & \textbf{59.2\%} & \textbf{72.7\%} & \textbf{63.5\%} & \textbf{33.5\%} & \textbf{60.4\%} & \textbf{58.8\%} \\ \cmidrule{2-11}

& \multirow{8}{*}{2.7B}
&  -                           & FP16  & 69.5\% & 66.6\% & 76.4\% & 69.5\% & 36.4\% & 64.2\% & 63.8\% \\ \cmidrule{3-11}
&                              & Quamba                      & W8A8  & 52.4\% & 60.4\% & 71.6\% & 62.9\% & 33.7\% & 58.0\% & 56.5\% \\ \cmidrule{3-11}
&                              &   \multirow{2}{*}{\makecell{Quamba2}}         
                               & W8A8  & 66.1\% & 65.5\% & 74.4\% & 68.4\% & \textbf{37.1\%} & \textbf{63.7\%} & 62.5\% \\
&                              &                             & W4A8  & 65.6\% & 65.1\% & 74.7\% & 68.1\% & \textbf{36.1}\% & 62.8\% & 62.1\% \\ \cmidrule{3-11}
&                              & \multirow{2}{*}{SSDi8 (Ours)} 
                               & W8A8  & \textbf{68.3\%} & \textbf{66.2\%} & \textbf{75.6\%} & \textbf{69.0\%} & 36.8\% & 63.4\% & \textbf{63.2\%} \\
&                              &                             & W4A8  & \textbf{67.4\%} & \textbf{65.3\%} & \textbf{75.6\%} & \textbf{68.9\%} & 35.2\% & \textbf{63.5\%} & \textbf{62.6\%} \\ \cmidrule{2-11}

& \multirow{8}{*}{8B}
&  -                           & FP16  & 70.9\% & 77.7\% & 79.7\% & 76.0\% & 48.0\% & 72.0\% & 70.7\% \\ \cmidrule{3-11}
&                              & Quamba                      & W8A8  & 54.0\% & 74.6\% & 77.1\% & 73.5\% & 44.2\% & 65.5\% & 64.8\% \\ \cmidrule{3-11}
&                              &    \multirow{2}{*}{\makecell{Quamba2}}       
                               & W8A8  & 69.8\% & \textbf{77.8}\% & 79.1\% & \textbf{75.9}\% & 46.9\% & 69.0\% & 69.8\% \\
&                              &                             & W4A8  & 68.8\% & \textbf{77.1}\% & 79.1\% & 75.0\% & 46.0\% & 68.7\% & 69.1\% \\ \cmidrule{3-11}
&                              & \multirow{2}{*}{SSDi8 (Ours)} 
                               & W8A8  & \textbf{70.4\%} & 77.2\% & \textbf{79.6\%} & 75.5\% & \textbf{47.2\%} & \textbf{71.2\%} & \textbf{70.2\%} \\
&                              &                             & W4A8  & \textbf{69.9\%} & 76.5\% & \textbf{79.1\%} & \textbf{75.4\%} & \textbf{46.2\%} & \textbf{70.6\%} & \textbf{69.6\%} \\

\bottomrule
\end{tabular}
}
\label{tab:zero_shot_main}
\end{table}

\textbf{Mean Correction for SSD Quantization Error.}
To further mitigate the accumulation of quantization errors across SSD layers, we introduce a per-channel mean correction strategy. Given full-precision and \textcolor{black}{de}quantized results 
\(XW = Y \in \mathbb{R}^{N,P}\) and \(X'W' = Y' \in \mathbb{R}^{N,P}\), \textcolor{black}{minimizing the error between $Y$ and $Y'$ can be formulated as a least-squares problem, and the optimal correction vector $c^{\star}$ is given in closed form as the channel-wise mean of the quantization error:}
\begin{equation}
E_c = \|Y - (Y' + c)\|_F^2 = \sum_{p=1}^P \sum_{i=1}^N \bigl((Y-Y')_{i,p} - c_p\bigr)^2,
\quad
c_p^\star = \tfrac{1}{N}\sum_{i=1}^N (Y-Y')_{i,p}.
\end{equation}
To ensure accurate estimation, we adopt a layer-wise sequential update strategy, enabling subsequent layers to reflect the applied corrections and, thereby, capture activation shifts induced by earlier updates. For a detailed description of the sequential update algorithm, please refer to \textcolor{black}{Algorithm 1 in Appendix ~\ref{sec:algorithm}.}
 To minimize overhead, \(c\) is applied only to the output projection layer, whose dimensionality is half that of the input projection layer and where quantization error is most pronounced. This design achieves consistent accuracy gains while incurring only marginal latency overhead ($\approx 1$--$2\%$).

\section{Experiments} \label{05exp}

\textbf{Experimental Setup.} We conduct PTQ experiments on Mamba-2~\citep{dao2024transformers} models with 1.3B, 2.7B, and 8B parameters. Experiments are primarily conducted on NVIDIA A5000 GPUs. We evaluate zero-shot performance on LAMBADA~\citep{LambdataDataset}, WinoGrande~\citep{WinoGrandeDataset}, PIQA~\citep{PIQADataset}, HellaSwag~\citep{HellaSwag}, ARC-Easy, and ARC-Challenge~\citep{arcDataset} benchmarks, and additionally assess language modeling capability via WikiText2 perplexity. Results are compared against the FP16 baseline, Quamba~\citep{chiang2024quamba} and Quamba2~\citep{chiangquamba2}\textcolor{black}{, and the HAD (HadMamba2) baseline, where HAD applies the Hadamard rotation to the Mamba-2 projection layers~\citep{chiangquamba2}, GPTQ weight quantization and RTN quantization of SSD inputs.}

\textbf{Quantization Setup.} We use symmetric, static quantization on both W8A8 and W4A8 configurations. For 4-bit weight quantization, we employ GPTQ~\citep{frantar2023gptq}, combined with Hadamard-transformed~\citep{ashkboos2024quarot} projection layers. To handle RMSNorm-induced outliers, we migrate the $\gamma$ parameter~\citep{wei2022outlier}, and apply mean correction with a factor of 0.15 to prevent estimation overfitting.

\begin{wraptable}{r}{0.55\textwidth}
\vspace{-0.2in}
    \caption{Wikitext2 perplexity with $L=2048$.}
    \label{tab:wikitext2_perplexity}
  \vspace{-0.1in}
\newcommand{\tabincell}[2]{\begin{tabular}{@{}#1@{}}#2\end{tabular}}
 \begin{center}
 \begin{threeparttable}
    \resizebox{1.\linewidth}{!}{
    \begin{tabular}{c|c|
    >{\centering\arraybackslash}p{1.6cm}
    >{\centering\arraybackslash}p{1.6cm}
    >{\centering\arraybackslash}p{1.6cm}}
    \toprule
     \multirow{2.5}{*}{Methods} & \multirow{2.5}{*}{Bitwidth}
    & \multicolumn{3}{c}{Wikitext2 Perplexity (↓)} \\
    \cmidrule{3-5}
      &  & 1.3B & 2.7B & 8B \\
    \midrule\midrule
    
     -         & FP16  & 10.42 & 9.06 & 7.25 \\ \cmidrule{1-5}
    
    \multirow{2}{*}{HAD}
    & W8A8         & 11.31 & 11.42 & 8.57 \\
               & W4A8  & 11.63 & 11.85 & 8.79 \\ \cmidrule{1-5}
    
     \multirow{2}{*}{Quamba2}
    & W8A8         & 10.80 & 9.32 & 7.79 \\
                  & W4A8 & 11.08 & 9.54 & 7.94\\
    \cmidrule{1-5}
    
     \multirow{2}{*}{SSDi8 (Ours)}
    & W8A8         & \textbf{10.63} & \textbf{9.22} & \textbf{7.49} \\
                 & W4A8  & \textbf{10.92} & \textbf{9.43} & \textbf{7.62} \\
    
    \bottomrule
    \end{tabular}
	}
	 \end{threeparttable}
	 \end{center}
\vspace{-0.2in}
\end{wraptable}

\subsection{Evaluation of Zero-shot and Generalization Performance}

Tab.~\ref{tab:zero_shot_main} reports zero-shot task performance of Mamba-2 models (1.3B, 2.7B, 8B) under FP16, Quamba, Quamba2, and our SSDi8 quantization. Average accuracy is computed over six benchmarks. Across all bit-widths (W8A8, W4A8) and model scales, SSDi8 consistently outperforms Quamba2. For example, on the 2.7B model with W4A8, SSDi8 improves over Quamba2 (62.7$\%$ vs. 62.1$\%$), and on the 8B model with W8A8, it achieves 70.2$\%$ compared to 69.8$\%$. These results underscore the robustness of SSDi8 across diverse configurations. Full comparisons, including HadMamba-2 and Quamba2 with W4A16, are provided in Appendix~\ref{sec:appen_results}.

\textbf{Perplexity Results.} 
To assess linguistic fluency and generalization, we report WikiText2 perplexity in Tab.~\ref{tab:wikitext2_perplexity}. Across all model scales and bit-widths, SSDi8 consistently achieves lower perplexity than Quamba2 while narrowing the gap to FP16. In particular, for the 8B model, SSDi8 yields reductions of 3.9$\%$ (7.49 vs. 7.79) under W8A8 and 4.0$\%$ (7.62 vs. 7.94) under W4A8. These results demonstrate that SSDi8 preserves linguistic fluency and generalization under quantization.

\begin{figure}[t] 
  \centering
  \includegraphics[width=1.0\textwidth]
  {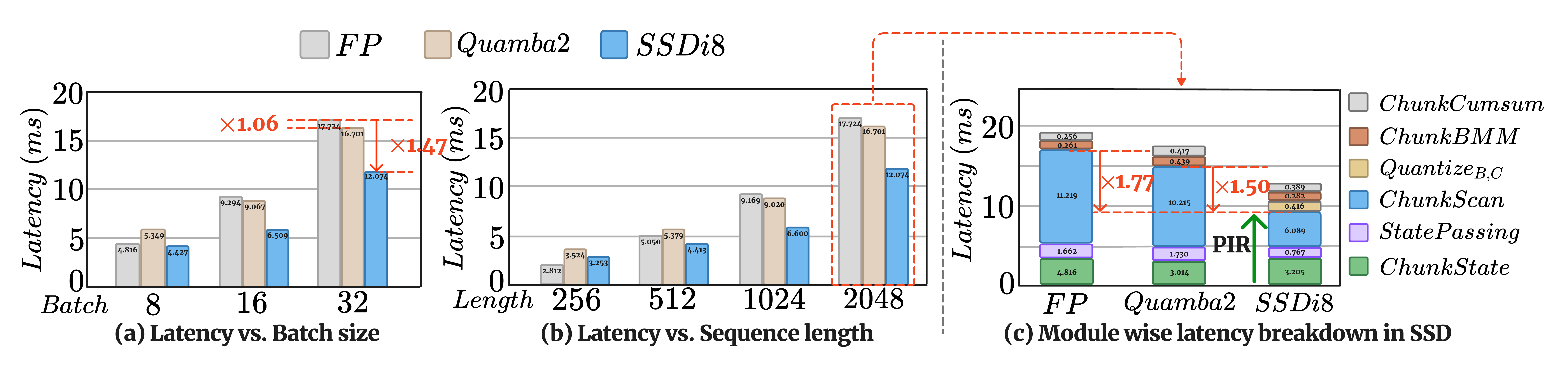} 
  \\[-1.0em]
  \caption{SSD latency of quantization methods on Mamba-2 2.7B: (a) varying batch ($L=2048$), (b) varying length ($B=32$), and (c) module-wise latency ($B=32$, $L=2048$). PIR denotes Persistent INT8 Representation. SSDi8 achieves up to $1.47\times$ overall speedup and $1.77\times$ in the State path.}
  \label{fig:SSD_latency}
  \vspace{-4pt}
\end{figure}

\subsection{Latency and Model Size}

\begin{wrapfigure}{R}{0.25\linewidth} 
  \vspace{-7.5pt}
  \centering
  \includegraphics[width=1.\linewidth]{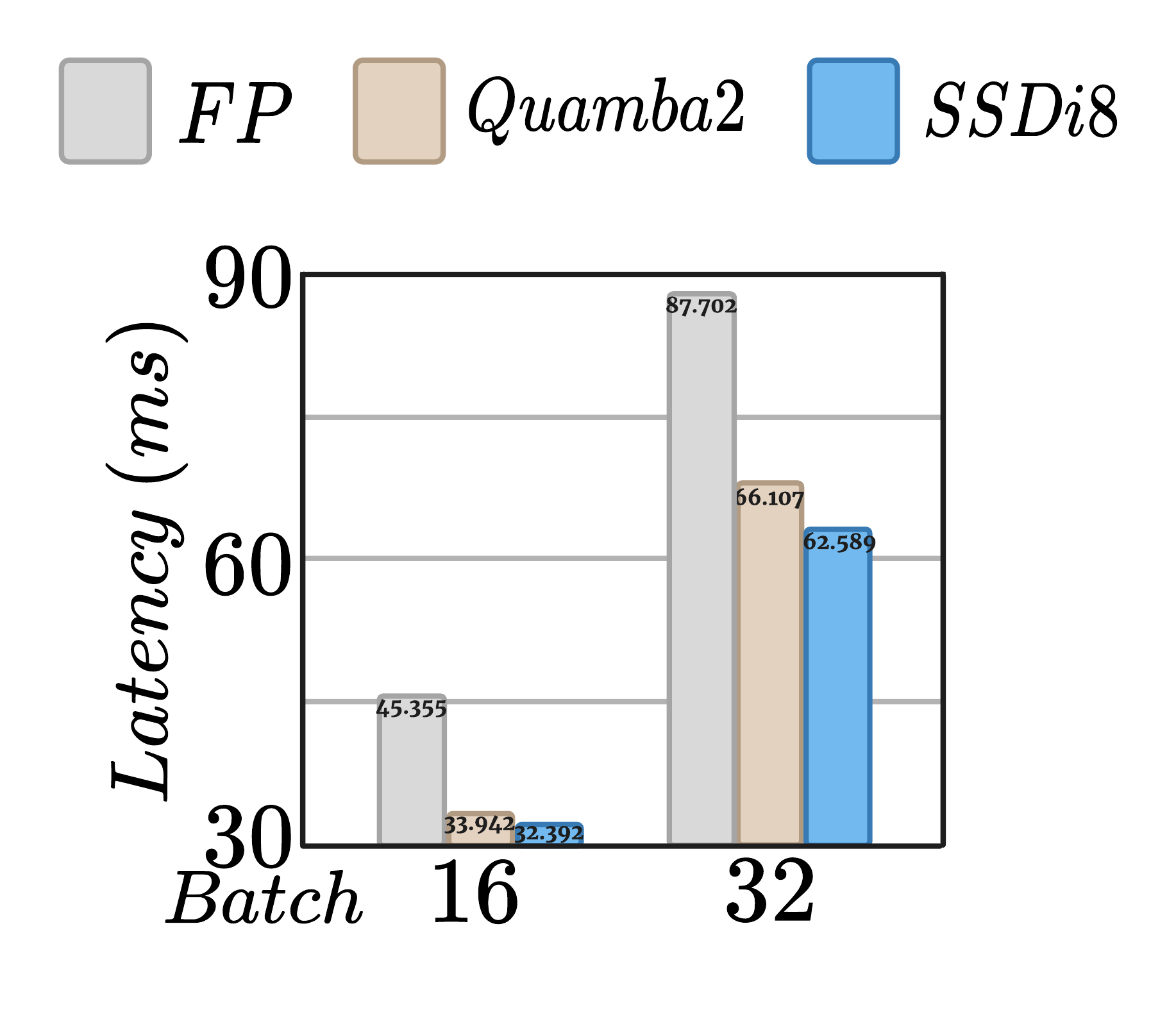}
  \\[-1.0em]
  \vspace{-4pt}
  \caption{Forward latency of W8A8 ($L=2048$) on 2.7B.}
  \label{fig:w8a8}
  \vspace{-5pt}

\end{wrapfigure}
In Fig.~\ref{fig:SSD_latency} (a) and (b), we compare SSDi8 with FP16 and Quamba2 on NVIDIA A5000 (24GB) across varying batch sizes ($B \leq 32$) and sequence lengths ($L \leq 2048$). Latency is measured in milliseconds as the average of 100 runs after warm-up. On Mamba-2 2.7B with $B=32, L=2048$, SSDi8 achieves a $1.47\times$ speedup over FP16 and a $1.38\times$ improvement over Quamba2. The benefit increases with larger batch sizes and longer sequences, where greater chunk-level parallelism amplifies throughput, while short sequences (e.g., $L=256$) may show higher FP16 efficiency due to lower computational intensity. Fig.~\ref{fig:SSD_latency} (c) reports module-level latency breakdown for 2.7B at $B=32, L=2048$. With persistent INT8 representation, \texttt{ChunkScan} achieves up to $1.77\times$ speedup over FP16 and $1.50\times$ over Quamba2, while \texttt{StatePassing} yields $2.25\times$ and $2.17\times$ improvements, respectively. As demonstrated in Fig.~\ref{fig:w8a8}, similar gains are observed under W8A8, and results on Mamba-2 8B are provided in Appendix~\ref{sec:appen_latency}.

\begin{wraptable}{r}{0.45\textwidth}
\vspace{-0.1in}
    \caption{SSD latency (ms) of SSDi8 vs. Quamba2 on Orin \textcolor{black}{NX} 16G.}
    \label{tab:orin_nano}
  \vspace{-0.1in}
\newcommand{\tabincell}[2]{\begin{tabular}{@{}#1@{}}#2\end{tabular}}
 \begin{center}
 \begin{threeparttable}
    \renewcommand{\arraystretch}{0.9}
    \resizebox{1.\linewidth}{!}{
    \begin{tabular}{
      >{\centering\arraybackslash}p{1.8cm}|
      >{\centering\arraybackslash}p{1.1cm}
      >{\centering\arraybackslash}p{1.1cm}
      >{\centering\arraybackslash}p{1.1cm}
      >{\centering\arraybackslash}p{1.1cm}
    }
    \toprule
    GPU & \multicolumn{4}{c}{Orin \textcolor{black}{NX} 16G} \\
    \midrule\midrule
    Bitwidth &
    \multicolumn{2}{c}{W4A8}
    & \multicolumn{2}{c}{W8A8} \\
    \cmidrule(lr){2-3}\cmidrule(lr){4-5}
    \cmidrule(lr){1-5}
    Method &
    Quamba2 & SSDi8 
    & Quamba2 & SSDi8 \\
    \midrule
    $L=256$  & 55.30 & 44.71 & 51.03 & 41.30 \\
    $L=512$  & 76.10 & 68.00  & 70.95 & 60.49 \\
    $L=1024$ & 134.40 & 127.51 & 139.10 & 114.36 \\
    $L=2048$ & 262.90 & 240.54 & 249.29 & 217.69 \\
    \bottomrule
    \end{tabular}
	}
	 \end{threeparttable}
	 \end{center}
\vspace{-0.2in}
\end{wraptable}

To further assess deployability under resource-constrained conditions, we evaluate SSDi8 on the NVIDIA Orin \textcolor{black}{NX} 16G, as shown in Tab.~\ref{tab:orin_nano}. Using the Mamba-2 2.7B model, we measure SSD latency across varying sequence lengths with a batch size of 16, comparing W4A8 and W8A8 quantization against Quamba2. Across all configurations, SSDi8 consistently outperforms Quamba2, demonstrating its robustness beyond high-scale accelerators.

Additional results evaluating longer sequence lengths and larger batch sizes are reported in Appendix~\ref{sec:appen_long_context} and Appendix~\ref{sec:appen_long_batch}, respectively.

\subsection{Ablation Studies}

\begin{wraptable}{r}{0.7\textwidth}
\vspace{-0.2in}
    \caption{Ablation results for internal SSD quantization ($Q(\text{SSD})$).}
    \label{tab:ablation_ssd}
  \vspace{-0.1in}
\newcommand{\tabincell}[2]{\begin{tabular}{@{}#1@{}}#2\end{tabular}}
 \begin{center}
 \begin{threeparttable}
    \resizebox{1.\linewidth}{!}{
    \begin{tabular}{c|c|c|c|c|c|c|c}
    \toprule
     \makecell{Bit- \\ width}
    & \makecell{ChunkState \\ Q(X)}
    & \makecell{Sparse \\ Reform.} 
    & \makecell{Quant. \\ of B,C} 
    & \makecell{Persistent \\ INT8} 
    & \makecell{Quant. of \\ ChunkBMM}  
    & Latency 
    & PPL \\
    \midrule\midrule
     \multirow{7}{*}{W4A8} 
            &   --     &   --   &   --    &    --      &  --    & 8.63   & 9.34     \\
       & \checkmark &      &       &          &      & 8.58   & 9.35     \\
       &  \checkmark      &  & \checkmark &         &      &  8.05    &  9.37    \\
       &  \checkmark      &  & \checkmark & & \checkmark   & 7.60     & 9.39     \\
       &        & \checkmark &  &  &  & 8.35  & 9.36\\
       &  & \checkmark & \checkmark & \checkmark &  & 8.00   & 9.42\\
       &  & \checkmark & \checkmark & \checkmark & \checkmark & 6.53 & 9.43\\
    \bottomrule
    \end{tabular}
	}
	 \end{threeparttable}
	 \end{center}
\vspace{-0.2in}
\end{wraptable}

 we present ablation results on Mamba-2 2.7B. The baseline retains FP16 only within SSD while applying W4A8 elsewhere. Comparing $Q(X)$ with the proposed reformulated $Q(X \odot LUT_{\text{state}})$ shows negligible quantization error, consistent with our theoretical proof. Avoiding element-wise multiplications after head expansion of $B$ yields measurable latency gains. Without reformulation, quantizing $X$ alone prevents the use of the persistent INT8 path, and the final latency improvement from quantizing $B$, $C$, and $CB$ is limited to $1.07\times$. By contrast, our reformulation enables INT8 execution in \texttt{ChunkScan1}, improving latency by $1.08\times$, and further quantization of \texttt{ChunkBMM} achieves a $1.32\times$ speedup. Perplexity degradation remains below $0.1$, indicating that our channel-aware quantization preserves accuracy. Further results are provided in Appendix~\ref{sec:appen_ablation}.

\begin{wraptable}{r}{0.30\textwidth} 
\vspace{-0.13in}
    \caption{Ablation results of SSDi8: $Q(\text{SSD})$ and correction \(c\).}
    \label{tab:ablation_correct}
  \vspace{-0.1in}
\newcommand{\tabincell}[2]{\begin{tabular}{@{}#1@{}}#2\end{tabular}}
 \begin{center}
 \begin{threeparttable}
    \resizebox{1.\linewidth}{!}{   
    \begin{tabular}{c|cc|c}
        \toprule
        \multirow{2}{*}{Bitwidth} & \multicolumn{2}{c|}{SSDi8} &  \multirow{2}{*}{Acc.} \\
          & $Q$(SSD) & Correct. &  \\         
        \midrule \midrule
         
         FP16 & -- & -- & 69.5\% \\ 
        \cmidrule{1-4}
        
         \multirow{3}{*}{W4A8} &            &           & 51.2\%  \\ 
        \cmidrule{2-4}
        
                                & \checkmark &           & 67.2\% \\ 
        \cmidrule{2-4}
        
                                & \checkmark & \checkmark & 67.4\% \\
        \bottomrule
        \end{tabular}
	}
	 \end{threeparttable}
	 \end{center}
\vspace{-0.2in}
\end{wraptable}

We perform an ablation study on SSD quantization and mean correction using the Lambada dataset, which exhibits minimal performance variance, and report in Tab.~\ref{tab:ablation_correct}. On Mamba-2 2.7B under the W4A8 setting, HadMamba quantization yields only 51.2\% accuracy, whereas applying SSD quantization substantially boosts performance to 67.2\%. Incorporating mean correction provides an additional improvement to 67.4\%, achieving consistent accuracy gains with only a $\sim$1--2\% overhead. These results demonstrate that SSDi8 achieves both accuracy and efficiency, while mean correction offers effective error correction with negligible additional latency.

\subsection{\textcolor{black}{Results on Hybrid Model: Nemotron-H-8B-Reasoning}}
\label{sec:appen_hybrid_nemotron}

We report results for applying SSDi8 to the Nemotron-H-8B-Reasoning model in Tab.~\ref{tab:hybrid_nemotron}, which adopts a Mamba–Transformer hybrid architecture. In this setting, INT8 quantization is applied exclusively to the SSD path, while all other components are kept in FP16, allowing us to isolate the effect of SSD-path quantization within the hybrid architecture. Tab.~\ref{tab:hybrid_nemotron} includes zero-shot benchmark accuracies and average accuracy, perplexity, as well as SSD-module latency and end-to-end forward latency, enabling a joint assessment of modeling performance and computational efficiency.

\begin{table}[t]
\centering
\small
\setlength{\tabcolsep}{5.5pt}
\caption{Hybrid Mamba--Transformer results on Nemotron-H-8B-Reasoning.
SSDi8 is applied only to the SSD path, while other modules remain in FP16.}
\label{tab:nemotron_hybrid_results}
\resizebox{\linewidth}{!}{%
\begin{tabular}{l c c c c c c c c c c}
\toprule
Method 
& Wino & PiQA & ARC-C & ARC-E & Hella & Lamb 
& Avg. & PPL 
& SSD Latency (ms) & Fwd. Latency (ms) \\
\midrule
FP16 
& 73.8 & 80.9 & 55.8 & 81.4 & 80.6 & 66.2 
& 73.1 & 8.42 
& 19.834 & 109.873 \\
INT8 
& 73.5 & 80.7 & 55.9 & 81.5 & 80.6 & 66.3 
& 73.0 & 8.65 
& 9.156 & 98.904 \\
\bottomrule
\end{tabular}%
}
\label{tab:hybrid_nemotron}
\end{table}

Applying INT8 quantization only to the SSD path results in minimal changes in zero-shot performance across Wino, PIQA, ARC-C/E, Hella, and Lamb compared to FP16. The average accuracy decreases slightly from 73.1\% (FP16) to 73.0\% (INT8), and perplexity exhibits a limited increase from 8.42 to 8.65. In contrast, latency-related metrics show clearer differences. The average SSD-module latency is reduced from 19.834 ms to 9.156 ms, corresponding to approximately a 2× reduction, and the overall forward latency decreases from 109.873 ms to 98.904 ms. These results quantitatively illustrate the contribution of the SSD path to overall inference time and the efficiency gains obtained by INT8 quantization of this component. In this experiment, quantization is restricted to the SSD path, while MLP and attention modules remain in FP16; extending quantization beyond the SSD path is left for future investigation.

\section{Conclusion} \label{06con}

In this work, we presented SSDi8, an INT8 quantization framework developed in the context of the SSD of Mamba-2. Unlike prior approaches limited to projections or partial SSD operations, SSDi8 establishes persistent INT8 representations through activation reuse and a sparse-aware reformulation. It further explores optimal quantization strategies by analyzing internal activations and incorporates mean correction to compensate for accumulated errors, enabling accurate and efficient inference for large-scale Mamba-2 models. 
SSDi8 achieves FP16-level accuracy while delivering up to 1.47$\times$ speedup over FP16 and 1.38$\times$ over Quamba2, and further demonstrates superior efficiency on edge devices such as NVIDIA Orin \textcolor{black}{NX}, as well as across diverse batch–sequence settings.
SSDi8 provides mathematical intuition for sparse-tensor quantization and offers guidance for quantization in environments where element-wise and recurrent operations are prevalent.

\subsubsection*{Acknowledgments}
This work was supported by the Institute of Information \& Communications Technology Planning \& Evaluation (IITP) grants funded by the Korea government (MSIT) [RS-2021-II211341, Artificial Intelligence Graduate School Program (Chung-Ang University); RS-2021-II211343, Artificial Intelligence Graduate School Program (Seoul National University)], the National Research Foundation of Korea (NRF) grants funded by the Korea government (MSIT) [RS-2025-00555943; 2022R1A3B1077720], the AI Computing Infrastructure Enhancement (GPU Rental Support) User Support Program funded by the Ministry of Science and ICT (MSIT), Republic of Korea (RQT-25-090040), and the BK21 FOUR program (Education and Research Program for Future ICT Pioneers) at Seoul National University in 2025.

\bibliography{iclr2026_conference}
\bibliographystyle{iclr2026_conference}

\clearpage
\newpage
\appendix
\begin{leftline}
	{
		\LARGE{\textsc{Appendix}}
	}
\end{leftline}

	\etocdepthtag.toc{mtappendix}
    \etocsettagdepth{mtchapter}{none}
    \etocsettagdepth{mtappendix}{subsection}
    
    {
        \hypersetup{linkcolor=black}
    	\footnotesize\tableofcontents
    }

\newpage
\section{Proof of Proposed Quantization Error Reduction via Reformulation}
\label{sec:proof}

\begin{prop}
Suppose that
\[
\sum_{p=1}^P \frac{\Delta_{x,p}^2}{12}\Bigl(\frac{\Delta_{y,p}}{\Delta_{x,p}}\Bigr)^2 \cdot P(y_p \neq 0) \;\;\le\;\; \|lut\|_2^2 \sum_{p=1}^P \frac{\Delta_{x,p}^2}{12}.
\]
Then it holds that
\[
\mathrm{MSE}_{x_{\text{scaled}}} \;\le\; \mathrm{MSE}_x.
\]
\end{prop}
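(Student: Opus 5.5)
We compare two pipelines for the \texttt{ChunkState} input, fixing one head and one sequence position so that $x\in\mathbb{R}^P$ is a row of $X$, $lut$ is the corresponding $LUT_{\text{state}}$ entries, and $y = lut\odot x$ is the reformulated $X_{\text{scaled}}$. The baseline computes $Q(x)\odot lut$ with per-channel step sizes $\Delta_{x,p}$; the reformulation computes $Q(y)$ with step sizes $\Delta_{y,p}$. The plan is to express both mean-squared errors under the standard additive-noise model of uniform rounding and then compare them term by term. In that model, the rounding error in a step-$\Delta$ quantizer is uniform on $[-\Delta/2,\Delta/2]$, independent of the signal, with variance $\Delta^2/12$.

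\textbf{Baseline error.} Write $Q(x_p)=x_p+e_p$ with $\mathbb{E}[e_p^2]=\Delta_{x,p}^2/12$. Then $Q(x)\odot lut - y = lut\odot e$. Summing over channels, and over the $l$ positions that $lut$ ranges over, gives
\[
\mathrm{MSE}_x=\|lut\|_2^2\sum_{p=1}^P \frac{\Delta_{x,p}^2}{12}.
\]
This needs two facts. First, $lut$ is deterministic given $\Delta$ and $dA_{\mathrm{cs}}$. Second, the noise $e_p$ is independent of $lut$ and has the same variance at every position. The second fact follows because the scale is shared along $\texttt{l}$, and it is this that lets $\|lut\|_2^2$ factor out of the sum.

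\textbf{Reformulated error.} Write $Q(y_p)=y_p+\epsilon_p$. The key point is sparsity: if $y_p=0$ exactly, then rounding is exact and $\epsilon_p=0$. So the noise is $\epsilon_p\,\mathbf{1}[y_p\neq0]$. Conditioning on the event $\{y_p\neq 0\}$ and using the uniform-noise variance $\Delta_{y,p}^2/12$ gives
\[
\mathrm{MSE}_{x_{\text{scaled}}}=\sum_{p=1}^P \frac{\Delta_{y,p}^2}{12}\,P(y_p\neq0).
\]
I would then rewrite $\Delta_{y,p}^2$ as $\Delta_{x,p}^2(\Delta_{y,p}/\Delta_{x,p})^2$, which is exactly the left-hand side of the hypothesis. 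Chaining the two displays with the hypothesis then yields $\mathrm{MSE}_{x_{\text{scaled}}}\le \mathrm{MSE}_x$ directly.

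\textbf{Main obstacle.} The delicate step is justifying the noise model at zeros. The uniform-noise approximation fails for near-zero inputs, and I need the conditional variance on $\{y_p\neq0\}$ to still equal $\Delta_{y,p}^2/12$. I would handle this by stating it explicitly as a modeling assumption: rounding noise is uniform and independent of the signal whenever the input is nonzero, and exactly zero when the input is zero. This is the ``mild condition'' the proposition alludes to.

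A secondary bookkeeping issue is the normalization of the two MSEs. They must be summed over the same index set (channels $p$ and positions along $\texttt{l}$), and $\|lut\|_2^2$ must be absorbed consistently on both sides. I would fix the convention that both MSEs are totals over the same index set before comparing them.
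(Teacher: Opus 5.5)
Your route matches the paper's Steps 2--4. The additive uniform-noise model gives $\|lut\|_2^2\Delta_{x,p}^2/12$ for the baseline. Exact zeros give $\rho_p\Delta_{y,p}^2/12$ for the reformulated path. Substituting $\Delta_{y,p}=s_p\Delta_{x,p}$ then turns the latter into the left-hand side of the hypothesis. The paper averages over $p$ instead of summing, which is immaterial.

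The gap is in the ``secondary bookkeeping issue'': your proposed fix would break the argument rather than tidy it. Your two displays, like the paper's, live on different index sets. The baseline display is summed over the $L$ positions along \texttt{l}, and that sum is the only source of $\|lut\|_2^2$. The reformulated display is the error at a single position. If you make both totals over $(p,l)$ as you promise, the reformulated error becomes
\[
\sum_{p=1}^P \frac{\Delta_{y,p}^2}{12}\sum_{l}\Pr(y_{l,p}\neq 0) \;=\; L\sum_{p=1}^P \rho_p\, s_p^2\,\frac{\Delta_{x,p}^2}{12},
\]
with $\rho_p$ the per-entry nonzero probability. The stated hypothesis then only yields $\mathrm{MSE}_{x_{\text{scaled}}}\le L\cdot\mathrm{MSE}_x$. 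To close the chain you would need the hypothesis with $\|lut\|_2^2/L$ (the mean of $lut_l^2$) in place of $\|lut\|_2^2$. The same factor appears if you normalize both errors per entry instead.

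So you cannot both repair the normalization and keep the hypothesis as written. Either adopt the paper's definitions verbatim (per-entry error for $X_{\text{scaled}}$, $l$-summed error for $X$), in which case your proof is the paper's, or strengthen the hypothesis by the factor $L$.

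Relatedly, your setup fixes one sequence position, which makes $lut$ a scalar; that conflicts with then summing over positions. The paper instead fixes a channel $p$ and treats $x_p$ and $lut$ as vectors along \texttt{l}.
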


\textbf{Notation.}\\
(1) We denote the Hadamard product by $\odot$. The quantization step size is $\Delta = \frac{\mathrm{Range}}{2^b-1}$.\\
(2) The dequantized input is $x' = deq(q(x))$. The output is $y_{l,p} = x_{l,p} \odot lut_l$.\\
(3) Let $\rho_p = P(y_p \neq 0)$ and $y_{l,p}^* = \{y_p : y_p \neq 0\}$.\\
(4) Vectors are denoted by $x_p = (x_{0,p},\dots,x_{L,p})$ and $y_p = (y_{0,p},\dots,y_{L,p})$, with error vector $e_{x,p} = (e_{0,x,p},\dots,e_{L,x,p})$.\\
(5) The L-vector $lut = (lut_0,\dots,lut_L)$ is fixed and deterministic.\\

\textbf{Assumptions.}\\
(1) $\min(y_p)<0$ and $\max(y_p)>0$.\\
(2) Quantization errors satisfy $e_{x,l,p} \sim U(-\tfrac{\Delta_{x,p}}{2},\tfrac{\Delta_{x,p}}{2})$, 
~~$e_{y,l,p} \sim U(-\tfrac{\Delta_{y,p}}{2}, \tfrac{\Delta_{y,p}}{2})$.\\
(3) Outliers are not considered in $y_p$.\\
(4) $0<\rho_p<1$.\\
(5) $lut$ is not a random variable.\\

\begin{proof}
\medskip
\textbf{Step 1. Step size relation.}  \\
In symmetric quantization, the step size $\Delta$ is determined by the min/max values. \\By Assumption (1), we have $\Delta_{y,p}^* = \Delta_{y,p}$.  
Let $s_p = \Delta_{y,p}/\Delta_{x,p}$, so that $\Delta_{y,p} = s_p \Delta_{x,p}$ and hence $\Delta_{y,p}^* = s \Delta_{x,p}$.

\medskip
\textbf{Step 2. Case $y' = (x \odot lut)'$.} \\ 
The reconstructed output is
\[
y_{l,p}' =
\begin{cases}
y_{l,p}^* + e_{y,l,p}^*, & \text{with prob. } \rho_p, \\
0, & \text{with prob. } 1-\rho_p.
\end{cases}
\]
Thus
\[
\mathrm{MSE}_{x_{\text{scaled}},p}
= \rho_p \, \mathbb{E}[(y_{l,p}'-y_{l,p})^2].
\]

Since the error $e_{y,l,p}^\star = y_{l,p}' - y_{l,p}^\star$ has zero mean, we have
\[
\mathbb{E}[(e_{y,l,p}^\star)^2] = \mathrm{Var}(e_{y,l,p}^\star).
\]

Therefore,
\[
\mathrm{MSE}_{x_{\text{scaled}},p} 
= \rho_p \cdot \mathbb{E}[(e_{y,l,p}^\star)^2] 
= \rho_p \cdot \mathrm{Var}(e_{y,l,p}^\star).
\]

Under the standard quantization noise model, 
\[
\mathrm{Var}(e_{y,l,p}^\star) = \frac{(\Delta_{y,p}^*)^2}{12},
\]
so that
\[
\mathrm{MSE}_{x_{\text{scaled}},p} 
= \rho_p \cdot \frac{(\Delta_{y,p}^*)^2}{12}.
\]
Averaging over $p$ gives
\[
\mathrm{MSE}_{x_{\text{scaled}}}
= \frac{1}{P}\sum_{p=1}^P \rho_p \frac{(\Delta_{y,p}^*)^2}{12}.
\]

\medskip
\textbf{Step 3. Case $y' = x' \odot lut$.}  \\
We expand
\[
\mathrm{MSE}_{x,p}
= \mathbb{E}\bigl[\|y_p'-y_p\|_2^2\bigr]
= \mathbb{E}\bigl[\|(x_p'-x_p)\odot lut\|_2^2\bigr].
\]
By component,
\[
\|(x_p'-x_p)\odot lut\|_2^2
= \sum_{l=1}^L (e_{x,l,p}\cdot lut_l)^2.
\]
Taking expectation,
\[
\mathbb{E}[\|e_{x,p}\odot lut\|_2^2]
= \sum_{l=1}^L lut_l^2 \cdot \mathbb{E}[e_{x,l,p}^2].
\]
Since $e_{x,l,p}$ is uniform, $\mathbb{E}[e_{x,l,p}^2] = \Delta_{x,p}^2/12$.  
Therefore,
\[
\mathrm{MSE}_{x,p} = \|lut\|_2^2 \cdot \frac{\Delta_{x,p}^2}{12}.
\]
Averaging gives
\[
\mathrm{MSE}_x = \frac{1}{P}\sum_{p=1}^P \|lut\|_2^2 \frac{\Delta_{x,p}^2}{12}.
\]

\medskip
\textbf{Step 4. Comparison.}  \\
Substituting $\Delta_{y,p}^* = s_p\Delta_{x,p}$,
\[
\mathrm{MSE}_{x_{\text{scaled}}}
= \frac{1}{P}\sum_{p=1}^P \rho_p \, s_p^2 \frac{\Delta_{x,p}^2}{12}.
\]
Thus, if
\[
\sum_{p=1}^P \rho_p s_p^2 \frac{\Delta_{x,p}^2}{12}
\;\;\le\;\;\sum_{p=1}^P \|lut\|_2^2 \frac{\Delta_{x,p}^2}{12},
\]
then
\[
\mathrm{MSE}_{x_{\text{scaled}}} \le \mathrm{MSE}_x.
\]

\end{proof}


\paragraph{Mildness of the sufficient condition.}
This condition is \emph{mild}. First, scaling typically reduces the dynamic range so that $\Delta_{y,p} \le \Delta_{x,p}$, i.e., $s_p \le 1$. 
Second, due to the sparsity of $X_{\text{scaled}}$, the activation probability is small ($\rho_p \ll 1$), which diminishes the left-hand side. 
Third, the $lut$ vector carries non-negligible energy across \emph{dimensions}, so $\|lut\|_2^2$ is not small. 
Consequently, in these typical regimes,
\[
\sum_{p=1}^P \rho_p s_p^2 \frac{\Delta_{x,p}^2}{12}
\;\le\;
\sum_{p=1}^P \|lut\|_2^2 \frac{\Delta_{x,p}^2}{12},
\]
and thus $\mathrm{MSE}_{x_{\text{scaled}}} \le \mathrm{MSE}_x$ follows naturally. 
For a detailed discussion of the empirical characteristics of the distributions of 
$x$, $x_{\text{scaled}}$, and $lut$, please refer to Fig.~\ref{fig:chunkstate} 
and Appendix~\ref{sec:appen_dist}.

\clearpage

\section{Algorithm}
\label{sec:algorithm}

\begin{algorithm}
  \caption{Sequential Mean Correction Update}
  \label{alg:smc}
  \small
  \begin{algorithmic}[1]
    \Require Quantized Blocks $B_{1:L}$, fp16 means $\mu_{\mathrm{fp}}[1{:}L]$, number of samples $S$, sequence length $T$ decaying factor $\eta$, target-layer set $L_{\text{tgt}}$

    \Statex \textbf{Fix initial inputs}
    \For{$s \gets 1$ \textbf{to} $S$}
      \State $X[s] \gets \mathrm{Embedding}(D[s],\, T)$
    \EndFor

    \For{$l \gets 1$ \textbf{to} $L$}
      \If{$l \notin L_{\text{tgt}}$}
        \For{$s \gets 1$ \textbf{to} $S$}
          \State $Y \gets B_l(X[s])$
          \State $X[s] \gets Y$
        \EndFor
        \State \textbf{continue}
      \EndIf

      \State $\mu_q \gets 0$;\; $N \gets 0$
      \For{$s \gets 1$ \textbf{to} $S$}
        \State $Y \gets B_l(X[s])$
        \State $m_s \gets \mathrm{Y}.{mean(0,1)}$ 
        \State $n_s \gets \mathrm{Y}.shape[0]\cdot \mathrm{Y}.shape[1]$
        \State $N \gets N + n_s$;\; $w_s \gets \dfrac{n_s}{N}$
        \State $\mu_q \gets \mu_q + w_s \cdot (m_s - \mu_q)$
      \EndFor

      \State $\delta \gets \mu_{\mathrm{fp}}[l] - \mu_q$
      \State $c[l] \gets  \eta \cdot \delta$

      \For{$s \gets 1$ \textbf{to} $S$}
        \State $Y_{\mathrm{comp}} \gets B_l\!\big(X[s];\, \text{apply } c[l]\big)$
        \State $X[s] \gets Y_{\mathrm{comp}}$
      \EndFor
    \EndFor
    
  \State \Return model with corrections applied
    
  \end{algorithmic}
\end{algorithm}

\textcolor{black}{
Algorithm~\ref{alg:smc} implements a layer-wise sequential mean correction strategy.
Mean correction estimates a channel-wise correction term from the
quantized output mean $\mu_q$, and the accuracy of this estimate depends
on the input distribution under which it is measured.
If correction terms are estimated independently using a fixed initial
input distribution, later layers will observe statistics that differ
from those encountered during actual inference, leading to biased
correction terms.
Instead, we update layers sequentially and propagate corrected activations
forward, so that each layer estimates $\mu_q$ under the distribution
produced by upstream corrections.
This procedure aligns correction estimation with the inference-time
signal flow and stabilizes cumulative quantization error across layers.
}

\newpage
\begin{algorithm}
  \caption{\textcolor{black}{SSD Layer}}
  \label{alg:ssd}
  \small
  \begin{algorithmic}[1]

    \Require 
      $X \in \mathbb{R}(B,L,H,P)$,\;
    $\Delta \in \mathbb{R}(B,L,H)$,\;
      decay activation $A \in \mathbb{R}(H)$,\\
      $B \in \mathbb{R}(B,L,G,N)$,\;
      $C \in \mathbb{R}(B,L,G,N)$,\\
     $L = c \cdot l$

    \Statex
    \Statex \textbf{Module 1: ChunkCumsum (Input $(\Delta, A)$ → Output $(\Delta, dA_{\mathrm{cs}})$)}
    \State $\Delta \gets \mathrm{softplus}(\Delta)$
    \State $A^{+} \gets \mathrm{discretize}(A)$
    \State $dA_{\mathrm{cs}} \gets \mathrm{CumSumDecay}(A^{+})$
      \Comment{$\in \mathbb{R}(B,H,c,l)$}

    \Statex
    \Statex \textbf{Module 2: ChunkState (Input $(dA_{\mathrm{cs}},\Delta,B,X)$ → Output State)}
    \State $\mathrm{Decay}_{\mathrm{state}} 
        \gets \exp(dA_{\mathrm{cs}}[:,:, :, l{-}1] - dA_{\mathrm{cs}})$
    \State $LUT_{\mathrm{state}} \gets \Delta \odot \mathrm{Decay}_{\mathrm{state}}$
        \Comment{$\in \mathbb{R}(B,H,c,l)$}
    \State $\mathrm{State} \gets X \times (B\,\odot LUT_{\mathrm{state}})$
        \Comment{$\in \mathbb{R}(B,H,c,P,N)$}

    \Statex
    \Statex \textbf{Module 3: StatePassing (Input (State, $dA_{\mathrm{cs}}$) → Output State)}
    \State $\mathrm{Decay}_{\mathrm{pass}} 
        \gets \exp(dA_{\mathrm{cs}}[:,:, :, l{-}1])$
        \Comment{$\in \mathbb{R}(B,H,c)$}
    \For{$i = 0$ to $c{-}2$}
        \State $\mathrm{State}[i{+}1] 
        \gets \mathrm{State}[i{+}1] 
        + \mathrm{Decay}_{\mathrm{pass}}[i{+}1] \odot \mathrm{State}[i]$
    \EndFor

    \Statex
    \Statex \textbf{Module 4: ChunkBMM (Input $(B, C)$ → Output $CB$)}
    \State $CB \gets C \times B$
    \Comment{$\in \mathbb{R}(B,H,c,l,l)$}

    \Statex
    \Statex \textbf{Module 5: ChunkScan1 (Input $(\mathrm{State}, C, dA_{\mathrm{cs}}, \Delta)$ → out$_{\mathrm{off}}$)}
    \State $\mathrm{Decay}_{\mathrm{scan1}} \gets \exp(dA_{\mathrm{cs}})$
    \State $LUT_{\mathrm{scan1}} \gets \Delta \odot \mathrm{Decay}_{\mathrm{scan1}}$
    \State $\mathrm{out}_{\mathrm{off}} 
        \gets (\mathrm{State} \times C^{\top}) \odot LUT_{\mathrm{scan1}}$
    \Comment{$\in \mathbb{R}(B,H,c,P,l)$}

    \Statex
    \Statex \textbf{Module 6: ChunkScan2 (Input $(X, CB, dA_{\mathrm{cs}}, \Delta)$ → out$_{\mathrm{diag}}$)}
    \State $\text{Let } dA_{\mathrm{cs}}^{(m)} \in \mathbb{R}^{(B,H,c,l,1)},\;
                   dA_{\mathrm{cs}}^{(n)} \in \mathbb{R}^{(B,H,c,1,l)} 
                   \text{ be the broadcasted forms of } dA_{\mathrm{cs}}.$
        
    \State $LUT_{\mathrm{scan2}} 
          \gets \Delta \;\odot\;
          \exp\!\big(dA_{\mathrm{cs}}^{(m)} - dA_{\mathrm{cs}}^{(n)}\big)$
          \Comment{$\in \mathbb{R}(B,H,c,l,l)$}
    \State $\mathrm{out}_{\mathrm{diag}}
          \gets X \times (CB \odot LUT_{\mathrm{scan2}})$
          \Comment{$\in \mathbb{R}(B,H,c,P,l)$}
    \Statex
    \Statex \textbf{Final Output}
    \State $Y \gets \mathrm{out}_{\mathrm{off}} + \mathrm{out}_{\mathrm{diag}}$

    \State \Return $Y$
    \Comment{$\in \mathbb{R}(B,H,c,P,l)$}
  \end{algorithmic}
\end{algorithm}

\textcolor{black}{\textbf{SSD layer}} computes the state-space dynamics of Mamba-2 using a parallel chunked formulation. Given input activations X, the layer first discretizes the step size Δ and decay activation A, and constructs per-chunk cumulative decay factors through \textbf{ChunkCumsum}.

\textbf{ChunkState} performs the input-to-state projection within each chunk in parallel, while

\textbf{StatePassing} propagates recurrent information across chunks to restore the global sequence dependency.

\textbf{ChunkBMM} computes the block-diagonal interaction matrix CB,

 which is exclusively used in the diagonal path. 

\textbf{ChunkScan1} generates the off-diagonal contribution from the recurrent state, and

\textbf{ChunkScan2} produces the diagonal contribution from the input representation with CB.

 The final SSD output is obtained by summing these two terms.

\clearpage

\section{Additional Related Works}

\textbf{Post-Training Quantization and LLM Quantization}.
Quantization approaches are generally divided into Quantization-Aware Training (QAT)~\citep{gholami2022survey}, which integrates quantization into the training process, and Post-Training Quantization (PTQ)~\citep{frantar2023gptq,xiao2023smoothquant,lin2024awq}, which applies quantization to models after pretraining. QAT is often considered strong in preserving accuracy, but for large-scale models the associated retraining cost can become prohibitively high. As a result, many recent studies have shifted attention toward PTQ, particularly in the context of large language models (LLMs)~\citep{touvron2023llama}. 

Among representative PTQ approaches, GPTQ~\citep{frantar2023gptq} proposes a weight-compensation PTQ method by leveraging approximate second-order information via the Hessian. SmoothQuant~\citep{xiao2023smoothquant} shifts the difficulty of activation quantization into weights, enabling stable W8A8 and W4A8 performance. QuaRot~\citep{ashkboos2024quarot} and SpinQuant~\citep{liuspinquant} achieve precise 4-bit quantization by applying random or learned rotation matrices to mitigate outliers. QServe~\citep{linqserve} highlights the practicality of W4A8 quantization in real environments, demonstrating its effectiveness in reducing inference latency for LLMs. However, these methods are inherently optimized for the structural properties of Transformers—such as self-attention and KV caching—and thus are not directly applicable to architectures like selective state space models, where continuous state updates and activation reuse play a central role.

\section{Additional Experimental Setting}

\textbf{Implementation.}
For quantization, we use a calibration set of 512 samples drawn from the Pile dataset. We apply 4-bit weight quantization to the in projection and out projection layers using GPTQ. To improve efficiency, the scaling parameter $\gamma$ of RMSNorm is fused into the in projection layer~\citep{wei2022outlier}. Except for the SSD module, activations are quantized to 8-bit with per-tensor quantization, while the fast Hadamard transform~\citep{ashkboos2024quarot} is fused into the corresponding layers. Inside the SSD, we adopt the same Triton~\citep{tridao_had,tridao_conv} as used in Mamba-2, but modified to fit the SSDi8 method. CUDA~\citep{cuda_hgemm,cuda_hgemv} based causal Conv1d operator is used without modification.

\clearpage

\section{Additional Accuracy Results}
\label{sec:appen_results}

Tab.~\ref{tab:zero_shot_full} presents an extended version of the accuracy results in Tab.~\ref{tab:zero_shot_main}. Evaluations are conducted on the same datasets, where HAD denotes applying Hadamard and 4-bit GPTQ quantization to Mamba-2. SSDi8 achieves performance comparable to Quamba2 under W4A16 quantization, even with W4A8 quantization.

\begin{table}[h!]
\centering
\caption{Evaluation of Mamba-2 models at 1.3B, 2.7B, and 8B scales using four quantization methods—HAD, Quamba, Quamba2, and SSDi8—across six zero-shot tasks: LA, HS, PIQA, Arc-E, Arc-C, and WG.}
\resizebox{1.0\linewidth}{!}{
\begin{tabular}{r|c|c|c|cccccc|c}
\toprule
Model & Size & Methods & Bitwidth & LA & HS & PIQA & Arc-E & Arc-C & WG & Avg. \\
\midrule\midrule

\multirow{30}{*}{Mamba-2}

& \multirow{10}{*}{1.3B}
&  -                           & FP16  & 65.6\% & 59.9\% & 73.3\% & 64.1\% & 33.3\% & 60.8\% & 59.5\% \\ \cmidrule{3-11}
&                              & \multirow{2}{*}{HAD}      
                               & W8A8  & 55.3\% & 59.4\% & 73.2\% & 64.0\% & 33.5\% & 58.2\% & 57.3\% \\
&                              &                             & W4A8  & 53.9\% & 58.9\% & 72.3\% & 63.6\% & 33.9\% & 59.1\% & 57.0\% \\ \cmidrule{3-11}
&                              & Quamba                      & W8A8  & 49.8\% & 58.5\% & 71.2\% & 61.9\% & 32.1\% & 58.1\% & 55.2\% \\ \cmidrule{3-11}
&                              & \multirow{3}{*}{\makecell{Quamba2}} 
                               & W4A16 & 64.3\% & 59.2\% & 72.6\% & 63.8\% & 33.1\% & 60.3\% & 58.9\% \\
&                              &                             & W8A8  & 62.0\% & 59.2\% & 72.5\% & 63.4\% & 32.7\% & 60.0\% & 58.3\% \\
&                              &                             & W4A8  & 61.0\% & 58.8\% & 72.4\% & 62.7\% & 32.6\% & 59.1\% & 57.7\% \\ \cmidrule{3-11}
&                              & \multirow{2}{*}{SSDi8 (Ours)} 
                               & W8A8  & 64.7\% & 59.7\% & 72.7\% & 64.0\% & 32.8\% & 60.9\% & 59.1\% \\
&                              &                             & W4A8  & 63.6\% & 59.2\% & 72.7\% & 63.5\% & 33.5\% & 60.4\% & 58.8\% \\ \cmidrule{2-11}

& \multirow{10}{*}{2.7B}
&  -                           & FP16  & 69.5\% & 66.6\% & 76.4\% & 69.5\% & 36.4\% & 64.2\% & 63.8\% \\ \cmidrule{3-11}
&                              & \multirow{2}{*}{HAD}      
                               & W8A8  & 53.8\% & 60.8\% & 73.8\% & 64.8\% & 35.8\% & 62.2\% & 58.5\% \\
&                              &                             & W4A8  & 51.2\% & 59.7\% & 73.0\% & 64.9\% & 34.6\% & 60.2\% & 57.3\% \\ \cmidrule{3-11}
&                              & Quamba                      & W8A8  & 52.4\% & 60.4\% & 71.6\% & 62.9\% & 33.7\% & 58.0\% & 56.5\% \\ \cmidrule{3-11}
&                              & \multirow{3}{*}{\makecell{Quamba2}} 
                               & W4A16 & 68.8\% & 65.6\% & 75.5\% & 68.6\% & 36.6\% & 64.9\% & 63.3\% \\
&                              &                             & W8A8  & 66.1\% & 65.5\% & 74.4\% & 68.4\% & 37.1\% & 63.7\% & 62.5\% \\
&                              &                             & W4A8  & 65.6\% & 65.1\% & 74.7\% & 68.1\% & 36.1\% & 62.8\% & 62.1\% \\ \cmidrule{3-11}
&                              & \multirow{2}{*}{SSDi8 (Ours)} 
                               & W8A8  & 68.3\% & 66.2\% & 75.6\% & 69.0\% & 36.8\% & 63.4\% & 63.2\% \\
&                              &                             & W4A8  & 67.6\% & 65.3\% & 75.6\% & 68.9\% & 35.2\% & 63.5\% & 62.7\% \\ \cmidrule{2-11}

& \multirow{10}{*}{8B}
&  -                           & FP16  & 70.9\% & 77.7\% & 79.7\% & 76.0\% & 48.0\% & 72.0\% & 70.7\% \\ \cmidrule{3-11}
&                              & \multirow{2}{*}{HAD}      
                               & W8A8  & 56.7\% & 75.3\% & 78.1\% & 74.1\% & 45.0\% & 65.6\% & 65.8\% \\
&                              &                             & W4A8  & 56.1\% & 74.6\% & 77.3\% & 73.8\% & 44.5\% & 66.0\% & 65.4\% \\ \cmidrule{3-11}
&                              & Quamba                      & W8A8  & 54.0\% & 74.6\% & 77.1\% & 73.5\% & 44.2\% & 65.5\% & 64.8\% \\ \cmidrule{3-11}
&                              & \multirow{3}{*}{\makecell{Quamba2}} 
                               & W4A16 & 71.2\% & 76.8\% & 79.1\% & 75.2\% & 45.9\% & 70.8\% & 69.8\% \\
&                              &                             & W8A8  & 69.8\% & 77.8\% & 79.1\% & 75.9\% & 46.9\% & 69.0\% & 69.8\% \\
&                              &                             & W4A8  & 68.8\% & 77.1\% & 79.1\% & 75.0\% & 46.0\% & 68.7\% & 69.1\% \\ \cmidrule{3-11}
&                              & \multirow{2}{*}{SSDi8 (Ours)} 
                               & W8A8  & 70.4\% & 77.2\% & 79.6\% & 75.5\% & 47.2\% & 71.2\% & 70.2\% \\
&                              &                             & W4A8  & 69.9\% & 76.5\% & 79.1\% & 75.4\% & 46.2\% & 70.6\% & 69.6\% \\

\bottomrule
\end{tabular}
}
\label{tab:zero_shot_full}
\end{table}

\newpage
We evaluate perplexity on the Pile benchmark for the 1.3B and 2.7B models. Across both model scales, SSDi8 surpasses Quamba2 and approaches FP16-level performance under W8A8 quantization, as shown in Tab.~\ref{tab:pile_perplexity}.

\begin{table}[h!]
\centering
\caption{Pile  perplexity with $L$ = 2048}
\resizebox{.7\linewidth}{!}{
\begin{tabular}{c|c|c|
>{\centering\arraybackslash}p{1.6cm}
>{\centering\arraybackslash}p{1.6cm}}
\toprule
\multirow{2.5}{*}{Model} & \multirow{2.5}{*}{Methods} & \multirow{2.5}{*}{Bitwidth}
& \multicolumn{2}{c}{Pile Perplexity (↓)} \\
\cmidrule{4-5}
&  &  & 1.3B & 2.7B \\
\midrule\midrule

\multirow{8}{*}{Mamba-2}
& -         & FP16  & 6.99  & 6.27 \\ \cmidrule{2-5}

& \multirow{2}{*}{HAD}
& W8A8      & 7.46 & 7.77 \\
&           & W4A8  & 7.87 & 8.17 \\ \cmidrule{2-5}

& \multirow{2}{*}{Quamba2}
& W8A8      & 7.20 & 6.44 \\
&           & W4A8  & 7.55 & 6.68 \\ \cmidrule{2-5}

& \multirow{2}{*}{SSDi8 (Ours)}
& W8A8      & 7.08 & 6.34 \\
&           & W4A8  & 7.41 & 6.57 \\

\bottomrule
\end{tabular}
}
\label{tab:pile_perplexity}
\end{table}

\clearpage

\section{Additional Ablation Studies}
\label{sec:appen_ablation}

Tab.~\ref{tab:ablation_8b} shows ablation results on the quantization axis of activations within SSD, evaluated on Wikitext2 perplexity. For activations $B, C$, per-$\texttt{G,N}$ yields the best performance, though the difference from per-$\texttt{G}$ is negligible (≈0.02). In contrast, $X$ and $\text{State}$ are highly sensitive to the choice of quantization axis, showing substantial degradation when either the $\texttt{P}$ or $\texttt{H}$ axis is not considered.

\begin{table}[h]
\centering
\caption{Ablation study for quantization axis.}
\setlength{\tabcolsep}{6pt}
\renewcommand{\arraystretch}{1.15}
\resizebox{.9\linewidth}{!}{
\begin{tabular}{c|c|c|c|c|c|c}
\toprule
Model & Bitwidth & Activation & per-T & per-P(\texttt{N}) & per-H(\texttt{G}) & Wikitext2 Perplexity \\
\midrule\midrule
\multirow{9}{*}{8B}
& \multirow{9}{*}{\makecell{W4A8 \\ SSD-FP16}}
& --      & -- & -- & -- & 7.42 \\
\cmidrule(lr){3-7}
& & \multirow{4}{*}{B,C}
           & v  &  &  & 7.59 \\
& &        &  & v  &  & 7.43 \\
& &        &  &  & v  & 7.44 \\
& &        &   & v  & v & 7.42 \\
\cmidrule(lr){3-7}
& & \multirow{4}{*}{X,State}
           & v  &  &  & 11.97 \\
& &        &  & v  &  & 8.59 \\
& &        &  &  & v  & 8.15 \\
& &        &   & v & v  & 7.42 \\
\bottomrule
\end{tabular}
}
\label{tab:ablation_8b}
\end{table}

We also analyze latency and accuracy variations with respect to the placement of mean correction. The highest accuracy gain is observed when mean correction is applied immediately after SSD layers, indicating error accumulation within SSD. In Mamba-2, the model dimension is halved after the out-projection layer, yielding the lowest latency when mean correction is applied. Considering the trade-off between latency and accuracy, we therefore apply mean correction only at the out-projection layer, as summarized in Tab.~\ref{tab:mean_correct_ablation}.

\begin{table}[h]
\centering
\caption{Accuracy and speedup for W4A8. Experiments are conducted on the LAMBADA dataset, using SSDi8 without mean correction as the baseline.}
\setlength{\tabcolsep}{8pt}
\renewcommand{\arraystretch}{1.2}
\begin{tabular}{c|c|c|c}
\toprule
Bitwidth & Project & Speedup & Acc. \\
\midrule\midrule
\multirow{4}{*}{W4A8} 
    & None  & $\times 1.00$   & 67.2\% \\
    & In    & $\times 0.945$ & 67.4\% \\
    & SSD   & $\times 0.975$  & 67.5\% \\
    & Out   & $\times 0.987$  & 67.4\% \\
\bottomrule
\end{tabular}
\label{tab:mean_correct_ablation}
\end{table}

\clearpage

\section{Additional Latency and Model Size Results}
\label{sec:appen_latency}
We further analyze the memory footprint and SSD latency of SSDi8, 
using FP16 and Quamba2 as reference baselines.

\begin{table}[h]
\caption{Memory usage comparison}
\centering
\setlength{\tabcolsep}{8pt}
\renewcommand{\arraystretch}{1.2}
\begin{tabular}{c|c|l|c|c}  
\toprule
Model & Size & Method & W8A8 & W4A8 \\
\midrule\midrule
\multirow{6}{*}{Mamba2}
  & \multirow{3}{*}{2.7B}
    & FP16           & \multicolumn{2}{c}{5.154 GB} \\ \cline{4-5}
  & & Quamba2        & 2.948GB & 1.766GB \\
  & & SSDi8 (Ours)   & 2.953GB & 1.774GB \\
\cmidrule(lr){2-5}
  & \multirow{3}{*}{8B}
    & FP16           & \multicolumn{2}{c}{15.710 GB} \\ \cline{4-5}
  & & Quamba2        & 9.860GB & 7.028GB \\
  & & SSDi8 (Ours)   & 9.867GB & 7.038GB \\
\bottomrule
\end{tabular}
\label{tab:memory_usage_comparison}
\end{table}

As shown in Table~\ref{tab:memory_usage_comparison}, the memory usage of SSDi8 is nearly identical to Quamba2.
For the 2.7B model under W8A8, SSDi8 requires 2.953GB compared to 2.948GB for Quamba2,
resulting in only a +0.17\% increase due to additional static quantization scales.
A similar trend is observed for the 8B model (9.867GB vs. 9.860GB, +0.07\%).

\begin{figure}[h] 
  \centering
  \includegraphics[width=.6\textwidth]
  {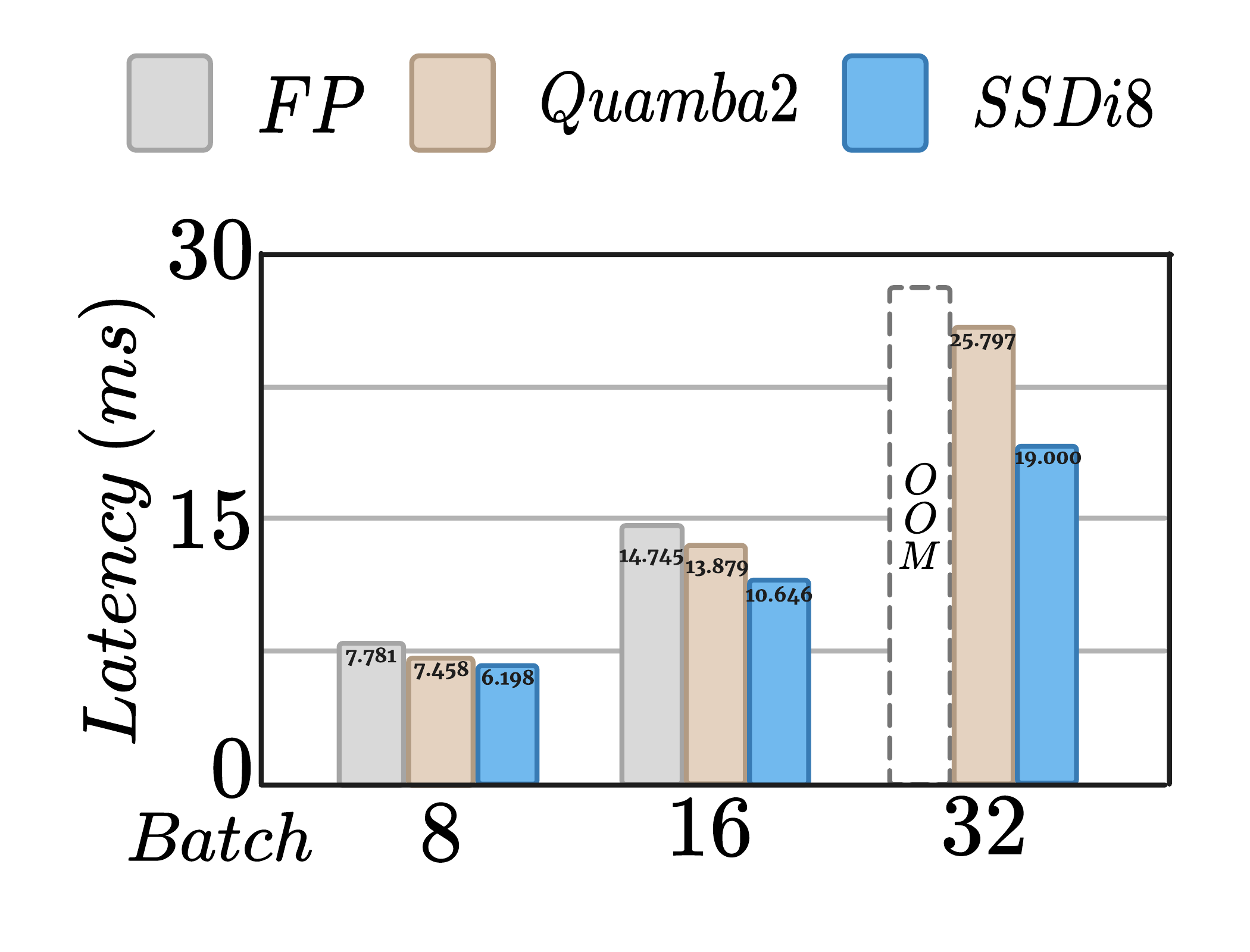}
  \\[-1.0em]
    \caption{Comparison with Quamba2 under W4A8 quantization on the 8B model is also reported. OOM denotes Out-Of-Memory.}
  \label{fig:appen_8b_latency}
\end{figure}

Figure~\ref{fig:appen_8b_latency} reports SSD-module latency across batch sizes.
Compared to FP16, SSDi8 consistently reduces latency at moderate batch sizes.
At batch 16, SSDi8 reduces latency from 14.745ms (FP16) to 10.646ms,
corresponding to a 27.8\% reduction.
At batch 32, FP16 encounters OOM, while Quamba2 requires 25.797ms.
Under the same setting, SSDi8 achieves 19.000ms,
yielding a 26.3\% latency reduction compared to Quamba2.
\clearpage

\section{Longer Context Results}
\label{sec:appen_long_context}

We report additional results on long-context evaluation beyond 2k tokens(Tab.~\ref{tab:long_context_results}), comparing FP16, Quamba2, and SSDi8 under identical experimental settings. All experiments are conducted on an A5000 GPU using the 2.7B Mamba-2 model with a batch size of 8 and sequence lengths ranging from 2k to 14k. For each context length, we measure SSD-module latency, end-to-end throughput (tokens/s), and perplexity (PPL) to assess both computational efficiency and modeling behavior in the long-sequence regime.

\begin{table}[H]
\centering
\small
\setlength{\tabcolsep}{5.5pt}
\caption{Long-context results (B=8): SSD latency, end-to-end throughput (tokens/s), and perplexity (PPL).}
\label{tab:long_context_results}
\resizebox{\linewidth}{!}{%
\begin{tabular}{c c | c c c | c c c | c c c}
\toprule
\multirow{2}{*}{Length} & \multirow{2}{*}{Batch}
& \multicolumn{3}{c|}{FP16}
& \multicolumn{3}{c|}{Quamba2}
& \multicolumn{3}{c}{SSDi8} \\
\cmidrule(lr){3-5}\cmidrule(lr){6-8}\cmidrule(lr){9-11}
& & SSD Lat. (ms) & Throughput (T/s) & PPL
  & SSD Lat. (ms) & Throughput (T/s) & PPL
  & SSD Lat. (ms) & Throughput (T/s) & PPL \\
\midrule
2k  & B=8 & 4.990  & 10480 & 9.062   & 5.313  & 11850 & 9.549   & 4.507  & 11981 & 9.470 \\
4k  & B=8 & 9.299  & 10716 & 13.845  & 8.998  & 12333 & 11.502  & 7.231  & 12818 & 9.021 \\
6k  & B=8 & 13.564 & 10922 & 113.407 & 13.363 & 12648 & 100.438 & 10.367 & 13038 & 8.943 \\
8k  & B=8 & 17.867 & 11173 & 444.467 & 15.932 & 13079 & 309.182 & 12.635 & 13418 & 9.040 \\
10k & B=8 & 22.061 & 11167 & 1577.003& 19.806 & 13121 & 1253.093& 15.827 & 13804 & 9.122 \\
12k & B=8 & 27.058 & 10887 & 2650.420& 23.531 & 13079 & ---     & 18.168 & 13743 & 9.239 \\
14k & B=8 & 30.870 & 11362 & 3881.000& 27.368 & 13291 & ---     & 21.138 & 14200 & 9.360 \\
\bottomrule
\end{tabular}%
}
\end{table}

Across all evaluated sequence lengths, SSDi8 consistently achieves higher throughput and lower SSD latency than both FP16 and Quamba2, with the performance gap widening as the context length increases.
At 14k tokens, SSDi8 improves throughput from 11,362 tokens/s (FP16) to 14,200 tokens/s, corresponding to a gain of +2,838 tokens/s (+25.0\%). Over the same setting, SSD latency is reduced from 30.870 ms to 21.138 ms, yielding a 9.73 ms reduction (-31.5\%).
In terms of perplexity, FP16 Mamba-2 exhibits rapid degradation beyond a context length of 4k, with PPL increasing from 13.845 (4k) to 113.407 (6k) and further escalating at longer contexts. Quamba2 shows a similar instability trend. In contrast, SSDi8 maintains stable perplexity across the entire evaluated range (2k–14k), with PPL remaining within a narrow band of 8.94–9.36.
These results indicate that SSDi8 preserves numerical stability in the long-context regime while simultaneously improving computational efficiency. The stability of SSDi8 at extended sequence lengths suggests that careful quantization of the SSD path mitigates the long-sequence numerical issues observed in full-precision and prior quantized baselines.

\clearpage

\section{Calibration Sensitivity Analysis}
\label{sec:appen_calibration_sensitivity}

We analyze the sensitivity of SSDi8 to calibration settings. Specifically, we consider both the number of calibration samples and the choice of calibration dataset. First, we report zero-shot performance while varying the calibration sample size from 128 to 2048 under an identical evaluation protocol, in order to examine how the performance changes as the amount of calibration data varies(see Tab.~\ref{tab:calib_size_sensitivity}). We then compare performance across different calibration datasets, including commonly used corpora such as C4, PTD, and Pile, to study the effect of the calibration data source (Tab.~\ref{tab:calib_dataset_sensitivity}).

\begin{table}[h!]
\centering
\small
\setlength{\tabcolsep}{4.5pt}
\caption{Sensitivity to the number of calibration samples.
Accuracy (\%) on six zero-shot tasks.}
\label{tab:calib_size_sensitivity}
\resizebox{.7\linewidth}{!}{%
\begin{tabular}{c|c c c c c c c}
\toprule
Calib Size 
& Wino & PiQA & ARC-C & ARC-E & Hella & Lamb & Avg. \\
\midrule
128  & 64.4 & 75.5 & 35.1 & 68.2 & 65.0 & 68.0 & 62.7 \\
256  & 64.5 & 75.3 & 36.4 & 67.8 & 64.8 & 67.0 & 62.6 \\
512  & 64.2 & 75.5 & 35.8 & 67.9 & 65.1 & 67.7 & 62.7 \\
1024 & 64.3 & 75.7 & 36.4 & 67.6 & 64.8 & 66.3 & 62.5 \\
2048 & 64.8 & 75.1 & 36.2 & 68.1 & 65.2 & 67.1 & 62.7 \\
\bottomrule
\end{tabular}%
}
\end{table}

SSDi8 shows only small variations with respect to both the number of calibration samples and the choice of calibration dataset. When increasing the calibration sample size from 128 to 2048, the average accuracy across six zero-shot tasks remains largely stable, with only minor differences across individual benchmarks. Similarly, changing the calibration dataset leads to comparable performance, and no single calibration corpus consistently results in lower accuracy. Overall, these results suggest that the performance of SSDi8 remains stable across a range of calibration configurations.

\begin{table}[h!]
\centering
\small
\setlength{\tabcolsep}{4.5pt}
\caption{Sensitivity to the choice of calibration dataset.
Accuracy (\%) on six zero-shot tasks.}
\label{tab:calib_dataset_sensitivity}
\resizebox{.7\linewidth}{!}{%
\begin{tabular}{l|c c c c c c c}
\toprule
Calib Data 
& Wino & PiQA & ARC-C & ARC-E & Hella & Lamb & Avg. \\
\midrule
C4   & 65.4 & 75.2 & 36.1 & 68.5 & 65.3 & 67.7 & 63.0 \\
PTD  & 65.1 & 75.1 & 36.5 & 68.6 & 65.5 & 67.6 & 63.0 \\
Pile & 64.5 & 75.3 & 36.4 & 67.8 & 64.8 & 67.0 & 62.6 \\
\bottomrule
\end{tabular}%
}
\end{table}

\clearpage

\section{Latency breakdown of the entire Mamba-2 block}
\label{sec:appen_latency_breakdown}

Tab.~\ref{tab:latency_breakdown} presents a module-wise latency breakdown of a single Mamba-2 block, decomposing the block into its major computational components: input projection (in-proj), convolution (conv), SSD, normalization (norm), output projection (out-proj), and the correction term. The results are reported for the 55th block of the 2.7B Mamba-2 model under batch sizes 8, 16, and 32, and compare FP16 and SSDi8 under identical experimental settings. This breakdown is intended to quantify where latency reductions are achieved by SSDi8 and to clarify the contribution of each submodule to the overall block latency.

\begin{table}[h!]
\centering
\small
\setlength{\tabcolsep}{4.5pt}
\caption{Latency breakdown (ms) of the Mamba-2 block (2.7B, $L=2048$)
under different batch sizes.}
\label{tab:latency_breakdown_mamba2}
\resizebox{.7\linewidth}{!}{%
\begin{tabular}{l|c|c c c c c c}
\toprule
Method & Batch 
& In-Proj & Conv & SSD & Norm & Out-Proj & Correction \\
\midrule
FP16  & B=8  & 9.829  & 0.781 & 4.795  & 1.196 & 4.919  & -- \\
FP16  & B=16 & 20.759 & 1.357 & 9.426  & 1.640 & 10.272 & -- \\
FP16  & B=32 & 41.297 & 2.526 & 17.640 & 3.086 & 20.474 & -- \\
\midrule
SSDi8 & B=8  & 8.124  & 0.587 & 4.481  & 1.089 & 3.266  & 0.426 \\
SSDi8 & B=16 & 16.758 & 1.011 & 7.293  & 1.442 & 6.677  & 0.698 \\
SSDi8 & B=32 & 33.893 & 1.799 & 13.199 & 2.291 & 12.914 & 1.252 \\
\bottomrule
\end{tabular}%
}
\label{tab:latency_breakdown}
\end{table}

under FP16 execution, the projection layers constitute the dominant latency bottleneck, followed by the SSD computation, while normalization and other components contribute relatively little. In SSDi8, Hadamard-based activation quantization and GPTQ weight quantization are applied to the projection layers, while the primary optimization focus is placed on the SSD module, which represents the second-largest bottleneck in FP16. Across batch sizes, SSD latency increases approximately linearly, exhibiting a scaling trend similar to that of the projection layers, whereas the remaining submodules remain negligible. We further observe that the output projection reduces the dimensionality of the hidden representation, allowing the mean-correction term introduced by SSDi8 to be applied at minimal cost. As a result, the correction latency remains lower than that of the convolution layer, which is the least expensive FP16 component. Overall, this breakdown illustrates how SSDi8 reduces latency primarily by optimizing the most time-consuming components within a Mamba-2 block, while preserving favorable scaling behavior across batch sizes.

\clearpage

\section{Batch-Size Sensitivity Analysis}
\label{sec:appen_long_batch}

To further examine the computational behavior of SSDi8, we evaluate SSD-module latency and throughput across varying batch sizes, ranging from edge-oriented small batches to large-batch cloud-serving scenarios. 
This analysis allows us to assess how INT8 quantization of the SSD path scales under different levels of computational intensity (see Tab.~\ref{tab:ssd_latency}).

\begin{table}[h!]
\centering
\small
\caption{SSD latency and throughput comparison ($L$ = 2048)}
\label{tab:ssd_latency}
\resizebox{.6\linewidth}{!}{
\begin{tabular}{c|c|>{\centering\arraybackslash}p{1.8cm}>{\centering\arraybackslash}p{1.8cm}}
\toprule
\multirow{2}{*}{Batch Size}
& \multirow{2}{*}{Method}
& \multicolumn{2}{c}{Value} \\
\cmidrule(lr){3-4}
& & Latency (ms) & Throughput (T/s) \\
\midrule\midrule

\multirow{2}{*}{16}
& FP16  & 2.757  &  9918 \\
& SSDi8 & 2.671  & 10299 \\
\midrule

\multirow{2}{*}{32}
& FP16  & 4.898  & 10653 \\
& SSDi8 & 4.510  & 11976 \\
\midrule

\multirow{2}{*}{64}
& FP16  & 9.271  & 10405 \\
& SSDi8 & 7.173  & 12114 \\
\midrule

\multirow{2}{*}{128}
& FP16  & 18.003 & 11222 \\
& SSDi8 & 13.406 & 13407 \\
\midrule

\multirow{2}{*}{256}
& FP16  & 35.194 & 11333 \\
& SSDi8 & 23.730 & 13798 \\
\bottomrule
\end{tabular}
}
\end{table}

SSDi8 consistently reduces SSD-module latency and improves throughput across all batch sizes. 
At batch size 16, SSD latency decreases from 2.757 ms to 2.671 ms (3.1\% reduction), while throughput improves by 3.8\%. 
As batch size increases, the performance gap widens. 
At batch size 128, SSD latency is reduced by 25.5\% and throughput increases by 19.5\%. 
In the large-batch regime (256), SSDi8 achieves a 32.6\% reduction in SSD latency (35.194 ms → 23.730 ms) and a 21.8\% improvement in throughput. 

These results indicate that the benefits of quantizing ChunkState and ChunkBMM become increasingly pronounced under higher computational intensity. 
While small-batch settings already show consistent gains, larger batch sizes amplify the arithmetic and memory-efficiency advantages of INT8 execution. 
Overall, SSDi8 demonstrates favorable scaling behavior across both edge and cloud-serving deployment regimes.

\clearpage

\section{Distributions of SSD Tensors}
\label{sec:appen_dist}

\textbf{Visualization of Activations.} Figure~\ref{fig:bc_distributions} represents that visualization of $B$, $C$, and $CB$ by group in the first, middle, and last blocks of the Mamba-2 8B model. As argued in Sec.~\ref{sec:04method}, the distributions differ across groups. $CB$ is masked as it is used for computing $out_{\text{diag}}$.

\begin{figure}[h] 
  \centering
  \includegraphics[width=.8 \textwidth]
  {iclr2026/appen_images/BCBC.pdf}
  \\[-1.0em]
  \caption{Visualization of the distributions of activations $B$, $C$, and $CB$ in in the first, middle, and last block of Mamba-2 8B.}
  \label{fig:bc_distributions}
\end{figure}

\newpage
Figure~\ref{fig:bc_distributions2} shows the visualization of $X$, $LUT_{state}$, and $X_{scaled}$ in the last block of Mamba-2 8B. The first row illustrates the full sequence length, while the second row depicts its partition into nchunks with the corresponding chunk size. Both $LUT_{state}$ and $X_{scaled}$ exhibit exponential growth as the chunksize index increases.

\begin{figure}[h]
  \centering
  \includegraphics[width=.9\textwidth]{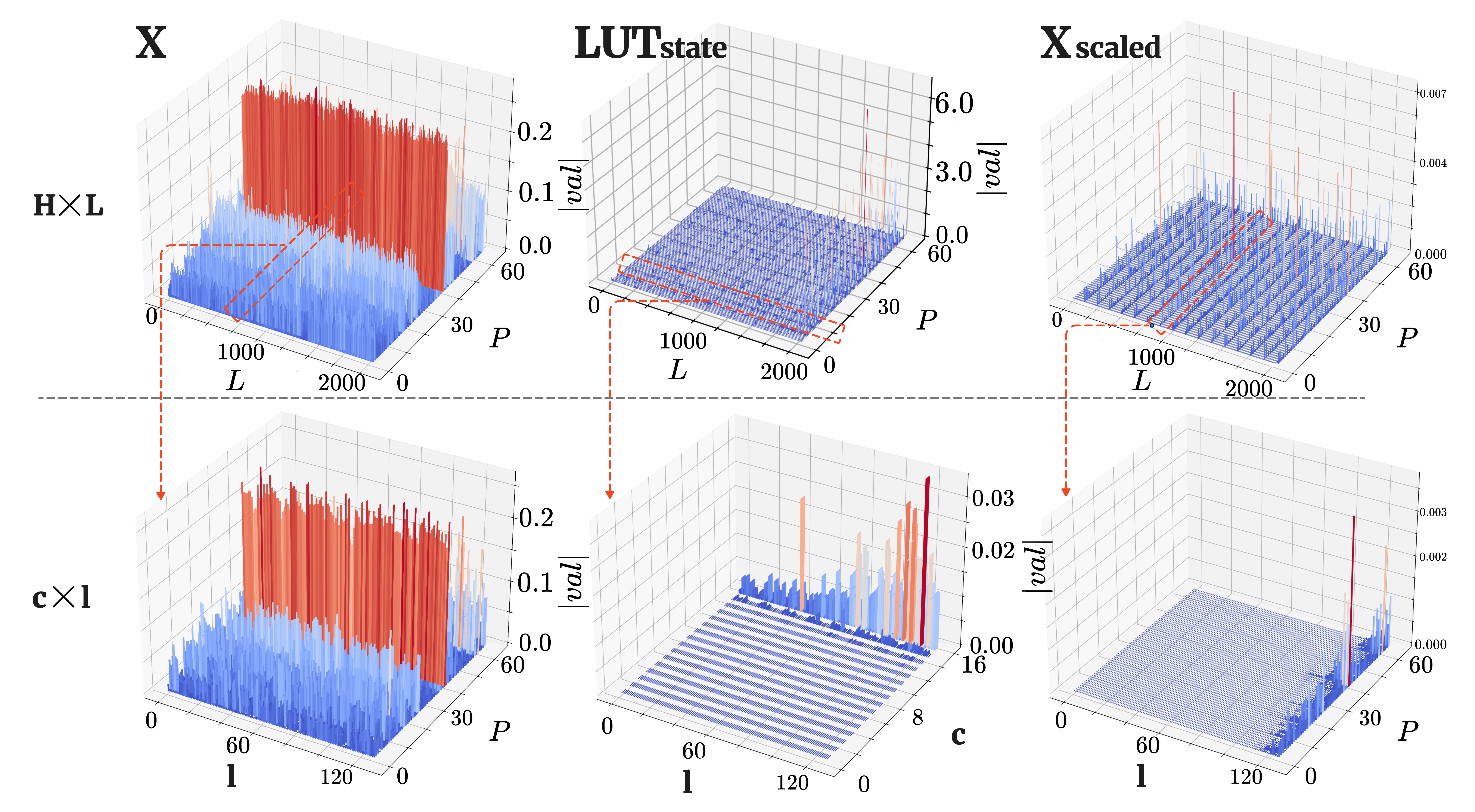}
  \caption{Visualization of the distributions of activations $X$, $LUT_{state}$, and $X_{scaled}$ in the last block of Mamba-2 8B.}
  \label{fig:bc_distributions2}
\end{figure}

\clearpage

\section{LLM Usage}

During the manuscript preparation, we used OpenAI’s GPT5 (https://chatgpt.com/), a Large Language Model, to proofread our work. Our interaction with the LLM was iterative and focused exclusively on improving the quality of the writing. We affirm that the LLM served as an assistive tool and did not contribute to core research ideas, experimental design, analysis, and results presented in this paper. The final scientific content and all claims made in this paper are the sole responsibility of the authors.

\end{document}